\documentclass[accepted]{uai2025} % after acceptance, for a revised version; 
% also before submission to see how the non-anonymous paper would look like 
                        
%% There is a class option to choose the math font
% \documentclass[mathfont=ptmx]{uai2025} % ptmx math instead of Computer
                                         % Modern (has noticeable issues)
% \documentclass[mathfont=newtx]{uai2025} % newtx fonts (improves upon
                                          % ptmx; less tested, no support)
% NOTE: Only keep *one* line above as appropriate, as it will be replaced
%       automatically for papers to be published. Do not make any other
%       change above this note for an accepted version.

%% Choose your variant of English; be consistent
\usepackage[american]{babel}

%% Some suggested packages, as needed:
\usepackage{natbib} % has a nice set of citation styles and commands
    \bibliographystyle{plainnat}
    
\usepackage{mathtools} % amsmath with fixes and additions
\usepackage{booktabs} % commands to create good-looking tables
\usepackage{tikz} % nice language for creating drawings and diagrams
\usepackage{enumitem}
\usepackage{amsfonts,latexsym,amsmath,stmaryrd,dsfont}
\usepackage{multido}
\usepackage{amsthm,amssymb}
\usepackage{pst-plot}
\usepackage{graphicx}
\usepackage{fancybox}
% Latex --> dvips --> pspdf
\usepackage[normalem]{ulem}
\usepackage{amscd}
\usepackage[utf8]{inputenc}
\usepackage{caption}
\usepackage{subcaption}
\usepackage{tikz}
\usetikzlibrary{arrows.meta, positioning}
\usepackage{bbold}
\usepackage[ruled,vlined]{algorithm2e}
\usepackage{url}
\usepackage[dvips]{epsfig}
\usepackage{hyperref}
\hypersetup{colorlinks = true, linkcolor =blue, anchorcolor = red, citecolor = blue, urlcolor = blue}
\hypersetup{linktocpage}

\newtheorem{theorem}{Theorem}%[section]
\newtheorem{lemma}[theorem]{Lemma}
\newtheorem{corollary}[theorem]{Corollary}

\newtheorem{defn}[theorem]{Definition}
\newtheorem{prop}[theorem]{Proposition}

\newtheorem{obs}[theorem]{Observation}

\newcommand{\suppress}[1]{}

\def\qed{\mbox{ }~\hfill~$\Box$}

\def\cA{{\mathcal A}}

\def\cC{{\mathcal C}}

\def\E{{\mathbb E}}

\def\cH{{\mathcal{H}}}
\def\cV{{\mathcal{V}}}

\DeclareMathOperator{\Pa}{Pa}
\DeclareMathOperator{\MEC}{MEC}

\newcommand{\be}{\begin{equation}}
\newcommand{\ee}{\end{equation}}
\newcommand{\bea}{\begin{eqnarray}}
\newcommand{\eea}{\end{eqnarray}}
\newcommand{\bean}{\begin{eqnarray*}}
\newcommand{\eean}{\end{eqnarray*}}

\newcommand{\rgc}{\textsc{Reverse}(G, C)}

\newenvironment{proofof}[1]{\mbox{} \newline {\noindent \em Proof of #1.}\/}{\hfill \qed}

\usetikzlibrary{graphs}

%% Provided macros
% \smaller: Because the class footnote size is essentially LaTeX's \small,
%           redefining \footnotesize, we provide the original \footnotesize
%           using this macro.
%           (Use only sparingly, e.g., in drawings, as it is quite small.)

%% Self-defined macros
 % just an example

\title{Lower Bounds on the Size of Markov Equivalence Classes}

% The standard author block has changed for UAI 2025 to provide
% more space for long author lists and allow for complex affiliations
%
% All author information is authomatically removed by the class for the
% anonymous submission version of your paper, so you can already add your
% information below.
%
% Add authors
\author[1]{\href{mailto:<ejahn@caltech.edu>?Subject=Your UAI 2025 paper}{Erik~Jahn}{}}
\author[2]{Frederick~Eberhardt}
\author[1]{Leonard~J.~Schulman}
% Add affiliations after the authors
\affil[1]{%
    Engineering and Applied Science\\
    California Institute of Technology\\
    Pasadena, California, USA
}
\affil[2]{%
    Humanities and Social Sciences\\
    California Institute of Technology\\
    Pasadena, California, USA
}
  
\begin{document}
\maketitle

\begin{abstract}
  Causal discovery algorithms typically recover causal graphs only up to their Markov equivalence classes unless additional parametric assumptions are made. The sizes of these equivalence classes reflect the limits of what can be learned about the underlying causal graph from purely observational data. Under the assumptions of acyclicity, causal sufficiency, and a uniform model prior, Markov equivalence classes are known to be small on average. In this paper, we show that this is no longer the case when any of these assumptions is relaxed. Specifically, we prove exponentially large lower bounds for the expected size of Markov equivalence classes in three settings: sparse random directed acyclic graphs, uniformly random acyclic directed mixed graphs, and uniformly random directed cyclic graphs. 
\end{abstract}

\section{Introduction}

One of the most powerful contributions of the theory of causal graphs \citep{Pea09,PC} is a complete characterization of which causal relationships can be learned from observational data without conducting experiments. This characterization is given by the concept of \emph{Markov equivalence}. Two causal models are Markov-equivalent if they encode the same conditional independence constraints. As a result, even a perfect causal discovery algorithm can only recover the true causal structure up to its Markov equivalence class, unless additional identifiable structure is assumed or happens to be present. Remarkably, Markov equivalence classes for directed acyclic graphs (DAGs) have a simple graphical characterization, due to~\citet{verma-pearl}. This is exploited by some of the most widely used causal discovery algorithms today, all of which return a Markov equivalence class as their output \citep{PC, GES, GPS}. Consequently, the size of these equivalence classes is a key measure for how informative the output of such algorithms is, making the challenge of counting the number of Markov equivalent graphs a subject of ongoing research \citep{counting_MECS, counting_MECs_2, counting_MECs_3}.

The largest Markov equivalence class of DAGs on $n$ variables has size $n!$, consisting of all the fully connected DAGs. However, for small graphs, numerical simulations \citep{Gillispie01, GILLISPIE02} and recursive enumeration \citep{STEINSKY2003, GILLISPIE2006, Steinsky2013} have shown that the average number of DAGs per Markov equivalence class is surprisingly small, even less than four. More recently, \citet{SS24} proved that for arbitrarily large $n$, the expected size of the Markov equivalence class of a uniformly random DAG on $n$ variables is bounded by a constant. These results crucially rely on the uniformity assumption, effectively placing almost all the weight on dense DAGs. In practice, however, we expect useful real-world causal structures to be sparse. Indeed, many causal discovery algorithms work best under a sparsity assumption \citep{efficient_PC, efficient-GES} and sparse priors for learning graphical models have been shown to improve practical performance \citep{sparse-learning, eggeling19a}. Numerous researchers have posed the question of how large Markov-equivalence classes are for sparse graphs \citep{Chickering-MEC-traversal, asking_our_question, interventional-MECs}.

In this paper, we give a first theoretical answer to this question. We show that for a wide range of sparse random DAG distributions, the expected size of the Markov equivalence classes basically scales exponentially in the inverse edge density. In particular, when the expected degree of each vertex is bounded by a constant, the Markov equivalence classes are exponentially large in the number of vertices of the DAG in expectation. This result reveals a sharp contrast to the uniform setting and also has algorithmic implications: Many greedy search algorithms can be run either in the space of DAGs or in the space of equivalence classes, but with small equivalence class sizes the efficiency gain remained unclear \citep{Chickering-MEC-traversal}. Our results suggest that the efficiency gap becomes larger as the input graphs are sparser.

Moreover, we extend our analysis of Markov equivalence classes beyond DAGs. Causal models represented by DAGs inherently make two assumptions: \emph{causal sufficiency}, meaning that there are no unobserved common causes, and \emph{acyclicity}, implying the absence of causal feedback loops. Both assumptions may be violated in applications, motivating the study of more general graphical models: acyclic directed mixed graphs (ADMGs), which allow for unmeasured confounders, and directed cyclic graphs (DCGs), which permit cyclic causal relationships. Just as for DAGs, Markov equivalence has been characterized for both models \citep{MECs_in_ADMGs, characterizing_MECs_in_DCGs} and corresponding causal discovery algorithms that output Markov equivalence classes have been developed \citep{FCI, richardson2018discovering}. The characterization of Markov equivalence classes is much more complex in both cases, and little is known about the sizes of these Markov equivalence classes. Here, we establish exponential lower bounds for the expected size of Markov equivalence classes of uniformly random ADMGs and DCGs. Our results exploit specific underdetermined substructures, leaving open the possibility that more restrictive model classes, such as maximal ancestral graphs \citep{mags}, may exhibit smaller Markov equivalence classes by prohibiting such structures. 
In general, our results highlight the need to supplement causal discovery with stronger, possibly parametric assumptions, or to perform interventions if one wants to reduce the size of the set of plausible graphs, let alone find a unique causal graph that explains the data. 

\subsection{Our Results}
In this section, we provide formal statements of our results. For a parameter $p \in [0,1)$, consider the following natural sampling process for generating a random DAG on $n$ vertices: 
\begin{itemize}
\item Include each possible directed edge independently with probability $p$;
\item If the graph has any directed cycles, reject and repeat. \end{itemize}
We denote the distribution arising from this process by $D(n,p)$ (see also Definition~\ref{defn:Dnp} for an alternative description). The parameter $p$ directly controls the edge density of the sampled graph, and one can check that $D(n, 1/2)$ is equal to the uniform distribution on vertex-labeled DAGs. \citet{heckerman1995learning} already used this distribution in the context of Bayesian networks and \citet{eggeling19a} showed that it has desirable properties as a prior for Bayesian inference over causal structures. While the sampling process described above becomes impractical for large $n$ due to a high rejection probability, a more efficient sampling algorithm for $D(n,p)$ has been given by \citet{talvitie}.

We study the sizes of Markov equivalence classes of DAGs sampled from $D(n,p)$ in the regime $6/n \leq p \leq o(1/\log n)$. The graphs in this regime have expected vertex degree ranging from constant (whenever $p = C/n$) up to $o(n/\log n)$. It turns out that the number of graphs that are Markov-equivalent to a DAG $G$ sampled from $D(n,p)$ scales at least almost exponentially in the inverse of $p$. 

\begin{theorem}\label{thm:DAGs}
    Let $6/n \leq p \leq o\left(\frac{1}{\log n}\right)$ and $G \sim D(n,p)$. Then, we have with probability $1-o(1)$: 
    \begin{equation*}
        |\MEC(G)| \geq 2^{\Omega\left(\frac{p^{-1}}{\log^2(p^{-1})}\right)}.
    \end{equation*}
    In particular, this implies:
    \begin{equation*}
        \E[|\MEC(G)|] \geq 2^{\Omega\left(\frac{p^{-1}}{\log^2(p^{-1})}\right)}.
    \end{equation*}
\end{theorem}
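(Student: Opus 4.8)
The plan is to exploit the Verma--Pearl characterization: two DAGs are Markov equivalent exactly when they share both their skeleton and their set of v-structures. Consequently $|\MEC(G)|$ is at least the number of distinct DAGs obtained from $G$ by local reorientations that preserve the skeleton and every v-structure. The cleanest rigorous device for producing such reorientations is the classical covered-edge reversal: an edge $a \to b$ whose endpoints have identical parent sets apart from the edge itself can be reversed to yield a Markov-equivalent DAG. I would first show that a collection of \emph{vertex-disjoint} covered edges can be reversed independently --- reversing one such edge changes only its own orientation and therefore leaves the parent sets that govern every other disjoint covered edge untouched, so each stays covered, acyclicity is maintained, and each of the $2^m$ subsets of reversals yields a distinct equivalent DAG. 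This reduces the theorem to the combinatorial statement
\begin{equation*}
m \;:=\; \#\{\text{vertex-disjoint reversible gadgets in } G\} \;=\; \Omega\!\left(\frac{p^{-1}}{\log^2(p^{-1})}\right)\quad\text{with probability } 1-o(1).
\end{equation*}

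Next I would pin down the gadget and count it. A reversible gadget is a small induced substructure carrying a designated non-compelled edge whose surrounding neighborhood is free of the configurations that would \emph{strongly protect} it (i.e.\ no boundary v-structure and no forcing via the Meek propagation rules). I would estimate the expected number of vertex-disjoint copies by a first-moment computation over $D(n,p)$, tuning the gadget's size so that copies remain plentiful throughout the whole regime $6/n \le p \le o(1/\log n)$, and then upgrade expectation to a with-high-probability count either by a second-moment argument or by greedily extracting disjoint copies. Summing one bit per disjoint copy gives the stated high-probability bound; the expectation bound follows immediately, since $|\MEC(G)| \ge 1$ always, so a with-high-probability lower bound transfers directly to $\E[|\MEC(G)|]$.

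The hard part is the \emph{dense end} of the regime, where $p$ approaches $o(1/\log n)$ and the average degree $pn$ diverges. There the obvious gadgets --- isolated edges, sources, or bounded-degree vertices --- occur with probability decaying like $e^{-\Theta(pn)}$ and vanish entirely, while reversibility is simultaneously fragile: a single extra parent of an endpoint creates a boundary v-structure and compels the edge. The gadget must therefore be \emph{insulated}: large enough to occur, yet shielded enough that no protecting configuration forms on its boundary. I expect the unavoidable insulation overhead, on the order of $\log^2(p^{-1})$ vertices of slack per reversible bit, to be exactly what produces the $\log^2(p^{-1})$ factor in the denominator of the exponent, and getting this trade-off tight is the crux of the argument. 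A secondary technical nuisance is that all occurrence probabilities must be evaluated under the acyclicity-conditioned measure $D(n,p)$ rather than the unconditioned product measure, so I would either argue that the conditioning is benign in this sparse regime or work directly with the equivalent description of $D(n,p)$ referenced in Definition~\ref{defn:Dnp}.
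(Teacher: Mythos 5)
Your reduction is exactly the paper's: covered edges (Lemma~\ref{lem:reversible}), the observation that a \emph{matching} of covered edges can be reversed independently to give $2^{|S|}$ Markov-equivalent DAGs (Observation~\ref{obs:key}), and then the task of exhibiting $\Omega(p^{-1}/\log^2(p^{-1}))$ vertex-disjoint reversible edges with high probability. That part is sound. The gap is in the counting step, and it stems from a misconception about the measure $D(n,p)$. You write that sources ``occur with probability decaying like $e^{-\Theta(pn)}$ and vanish entirely'' at the dense end, and that you would therefore need an elaborate ``insulated gadget,'' hoping either that the acyclicity conditioning is benign or can be worked around. Neither is true: the conditioning is emphatically \emph{not} benign, and it is precisely what saves the argument. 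Under $D(n,p)$ the tower decomposition has first layer of size $h_1 \in [p^{-1}/(20\log(p^{-1})),\, 5p^{-1}]$ and second layer of size $h_2 \geq p^{-1}/\log^2(p^{-1})$ with high probability (Lemmas~\ref{lem:upper-tailbd} and~\ref{lem:lower-tailbd} in the paper) --- these scales depend on $p$ alone, not on $n$, so sources remain plentiful throughout the entire regime. A first-moment computation under the unconditioned product measure, as you propose, would indeed show the gadgets vanish, which is why your plan as stated would fail.

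The correct gadget is then much simpler than what you envision: take an edge $v \to w$ with $w$ in the second layer. Then $v$ is automatically a source, so $\Pa(v)=\emptyset$, and the edge is covered iff $w$ has no \emph{other} parent --- no Meek-rule or ``strong protection'' analysis is needed. Since $h_1 \leq 5p^{-1}$, a given layer-2 vertex has exactly one parent with probability roughly $(1-p)^{h_1} \geq e^{-5}/2$, a \emph{constant}, and because these single parents are (conditionally on the tower decomposition) uniform and independent over $H_1$ by Lemma~\ref{lem:tower-distr}\ref{item:b}, most of them are distinct, yielding a matching of size $\Omega(h_2) = \Omega(p^{-1}/\log^2(p^{-1}))$. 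Note also that the $\log^2(p^{-1})$ in the denominator is not ``insulation overhead'' of the gadget; it comes from the lower tail bound on $h_2$. To repair your proof you would need to establish the layer-size concentration for $D(n,p)$ (the technical heart of the paper, done via weight comparisons between tower vectors), which your proposal does not supply or anticipate.
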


For the more general settings of ADMGs and DCGs, we show that the expected size of the Markov equivalence class is (super-)exponential in the number of vertices when the graph is sampled uniformly at random. 

\begin{theorem}\label{thm:MECs}
    Let $G$ be a uniformly random ADMG on $n$ vertices, and let $H$ be a uniformly random DCG on $n$ vertices. Then, we have 
    \begin{enumerate}[label=(\alph*)]
        \item $\E[|\MEC(G)|] \geq 2^{\Omega(n^2)}$; \label{item:admg}
        \item $\E[|\MEC(H)|] \geq 2^{\Omega(n)}.$ \label{item:dcg}
    \end{enumerate}
\end{theorem}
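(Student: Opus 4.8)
The plan is to sidestep the intricate Markov-equivalence characterizations for ADMGs and DCGs entirely, and instead lower bound the expectation through a single enormous equivalence class together with a simple averaging identity. Write $\mathcal{G}$ for the set of all graphs in the relevant model on $n$ labeled vertices and $N = |\mathcal{G}|$. Grouping the graphs by equivalence class $C$ and using that every $G \in C$ satisfies $|\MEC(G)| = |C|$, I get
\begin{equation*}
  \E[|\MEC(G)|] = \frac{1}{N}\sum_{G \in \mathcal{G}} |\MEC(G)| = \frac{1}{N}\sum_{C} |C|^2 \;\geq\; \frac{|C^\ast|^2}{N}
\end{equation*}
for any fixed class $C^\ast$. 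So it suffices to exhibit one class with $|C^\ast|^2 / N = 2^{\Omega(n^2)}$.

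For $C^\ast$ I would take the class of \emph{complete} graphs (every pair of vertices adjacent, by an edge of some type). The key observation is that all complete graphs lie in one Markov equivalence class: if two vertices are adjacent, the single-edge path between them has no non-endpoint vertex, hence is active under every conditioning set, so adjacent vertices are never m-separated (ADMGs) nor d-separated (DCGs). A complete graph therefore induces the empty independence model, and since Markov equivalence in both models is equality of the induced separation relations \citep{MECs_in_ADMGs, characterizing_MECs_in_DCGs}, every complete graph is equivalent to every other. This is precisely the kind of ``underdetermined substructure'' being exploited, and it is exactly what is outlawed once one passes to more restrictive classes such as MAGs \citep{mags}.

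It then remains to count. For DCGs, $N = 4^{\binom{n}{2}}$ (each unordered pair is independently in one of four states: no edge, a single arc in either direction, or a pair of opposing arcs), whereas a complete digraph puts each pair in one of the three adjacent states, so $|C^\ast| = 3^{\binom{n}{2}}$ and
\begin{equation*}
  \E[|\MEC(H)|] \;\geq\; \frac{9^{\binom{n}{2}}}{4^{\binom{n}{2}}} = \left(\tfrac{9}{4}\right)^{\binom{n}{2}} = 2^{\Omega(n^2)}.
\end{equation*}
For ADMGs I would fix a topological order on the vertices and let each forward pair carry a directed edge, a bidirected edge, or both; all resulting graphs are complete with acyclic directed part, giving $|C^\ast| \geq 3^{\binom{n}{2}}$. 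Bounding the total count by $N \leq a_n \cdot 2^{\binom{n}{2}} \leq n!\,2^{2\binom{n}{2}}$, where $a_n$ is the number of labeled DAGs, yields $\E[|\MEC(G)|] \geq 9^{\binom{n}{2}}/(n!\,4^{\binom{n}{2}}) = (9/4)^{\binom{n}{2}}/n! = 2^{\Omega(n^2)}$, using $n! = 2^{o(n^2)}$.

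The main obstacle is not the counting but pinning the separation and equivalence conventions down precisely: I must confirm that the cited characterizations really define Markov equivalence as identity of the m-separation (resp.\ d-separation) relation, and that adjacency genuinely precludes separation under those exact definitions, including for cyclic graphs where the global Markov property is more delicate. A secondary point is the ADMG edge convention --- whether a directed and a bidirected edge may coexist on a single pair. The clean $3^{\binom{n}{2}}$ count above uses coexistence, but I would verify that even without it the complete-skeleton ADMGs (which are in bijection with labeled DAGs, hence number $2^{\binom{n}{2}+o(n^2)}$) still exceed $N^{1/2}$ by a $2^{\Omega(n^2)}$ factor, so that the conclusion is insensitive to this modeling choice.
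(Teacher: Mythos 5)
Your proof is correct, and it takes a genuinely different route from the paper. You lower-bound the expectation via the identity $\E[|\MEC(G)|] = \frac{1}{N}\sum_{C}|C|^2 \geq |C^\ast|^2/N$ and exhibit a single enormous class, namely the complete graphs, which all induce the empty separation relation because a one-edge path between adjacent vertices is vacuously active under every admissible conditioning set (the paper's own definition of Markov equivalence as identity of the d-connection relation is exactly what you need, so your worry about conventions is moot; likewise, the paper's ADMG model explicitly layers bidirected edges on top of a DAG, so directed and bidirected edges may coexist and your $3^{\binom{n}{2}}$ count is valid). The counting then goes through: $N=4^{\binom{n}{2}}$ for DCGs, and $N=a_n 2^{\binom{n}{2}} \le n!\,4^{\binom{n}{2}}$ for ADMGs with $n!=2^{o(n^2)}$. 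The paper instead shows that a \emph{typical} random graph contains $\Theta(n^3)$ vertex triples carrying the underdetermined-edge gadgets $S,\overline S$ (resp.\ $T,\overline T$), which can be toggled independently after passing to an edge-disjoint subfamily; for ADMGs this requires conditioning on the tower via Lemma~\ref{lem:tower_conditioning} and controlling layer sizes via Lemma~\ref{lem:tower_size}. What your argument buys is brevity and elementarity --- no gadgets, no tower machinery. What it gives up is significant, though: your expectation bound is driven entirely by an event of probability $(3/4)^{\binom{n}{2}}$ (being complete), so it says nothing about typical graphs, whereas the paper's construction yields (as remarked after each proof) that \emph{almost all} ADMGs and DCGs have $2^{\Omega(n^2)}$-size classes, and it supplies the structural insight about underdetermined adjacencies that the Discussion section relies on when contrasting with MAGs. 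As a side remark, your method applied to the refined relation ``Markov-equivalent with identical adjacencies'' would even give $\E[|\MEC^*(H)|]\ge (9/4)^{\binom{n}{2}}$ for DCGs, formally stronger than Theorem~\ref{thm:special-MECS}, but again only as an expectation statement dominated by a vanishing-probability event.
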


In fact, our proof for Theorem~\ref{thm:MECs} shows that each ADMG has on average already exponentially many Markov equivalent graphs that all have different edge adjacencies. This stands in contrast to the DAG setting where all Markov equivalent DAGs have the same adjacencies \citep{verma-pearl}. For DCGs, we show that each graph has on average already exponentially many Markov equivalent graph that differ in the direction of their cycles.

The rest of this paper is structured as follows: in Section~\ref{sec:preliminaries}, we provide definitions and notation for graphical models, and state some useful known results. In section~\ref{sec:DAGs} we present an outline of our approach towards proving Theorem~\ref{thm:DAGs} for DAGs (with full technical proofs in the appendix). The proof of Theorem~\ref{thm:MECs}, part~\ref{item:admg} for ADMGs can be found in Section~\ref{sec:ADMGs} and the proof of Theorem~\ref{thm:MECs}, part~\ref{item:dcg} for DCGs is given in Section~\ref{sec:DCGs}. We discuss the implication of our results and further open questions in Section~\ref{sec:Discussion}.

\section{Preliminaries}\label{sec:preliminaries}

\subsection{Graphical Models}
A \emph{directed graph} $G$ consists of a \emph{vertex set} $V(G)$ and a set of \emph{directed edges} $E(G)$, which are ordered pairs $(v,w)$ of vertices $v \neq w \in V(G)$. A \emph{directed mixed graph} additionally has a set of bidirected edges that are unordered pairs of vertices $\{v,w\}$ for $v \neq w$. We denote directed edges by $v \to w$ and bidirected edges by $v \leftrightarrow w$. For a subset of vertices $W \subseteq V(G)$, we define the \emph{induced} graph $G[W]$ as the graph whose vertex set is $W$ and whose edges are the edges of $G$ that lie entirely within $W$. A (possibly self-intersecting) \emph{path} in a directed mixed graph $G$ is an ordered list of vertices $(v_1, \dots, v_k)$ with $v_1 \neq v_k$ such that there is a directed edge (in either direction) or bidirected edge between any two consecutive vertices in the list. A \emph{directed path} is an ordered list of vertices $(v_1, \dots, v_k)$ with $v_1 \neq v_k$ such that there is a directed edge from $v_i$ to $v_{i+1}$ for $i=1, \dots, k-1$. A \emph{cycle} $C = (v_1, \dots, v_k)$ is a directed path that additionally has a directed edge from $v_k$ to $v_1$ (all cycles are directed). In the context of a cycle of length $k$, we usually consider all indices modulo $k$, in particular, we identify $v_{k+1} = v_1$. The \emph{parents} of a vertex $v$ are the vertices $u$ that have a directed edge $u \to v$, the \emph{children} of $v$ are the vertices $w$ that have a directed edge $v \to w$, and the \emph{descendants} of $v$ are the vertices $x$ such that there is a directed path from $v$ to $x$. We denote the set of parents of $v$ by $\Pa(v)$. A \emph{source} of the graph $G$ is a vertex without parents. A \emph{matching} of edges is a set of edges that do not have any vertices in common (i.e.\ their sets of endpoints are pairwise disjoint from each other). A \emph{directed acyclic graph (DAG)} is a directed graph that contains no cycles and a \emph{directed cyclic graph (DCG)} is a directed graph that may or may not contain cycles (this terminology has been used in the literature to clearly distinguish from the more popular DAGs in the context of causal models). An \emph{acyclic directed mixed graph (ADMG)} is a directed mixed graph that contains no (directed) cycles. We study random DAGs under the following probability distribution:

\begin{defn} \label{defn:Dnp}
    Fix a positive integer $n$ and a parameter $p \in [0,1)$. We define the distribution $D(n,p)$ over DAGs on $n$ vertices as the distribution that assigns each DAG $G$ with $e$ edges a probability proportional to $w(G) = \left(\frac{p}{1-p}\right)^e$.
\end{defn}

Equivalently, $D(n,p)$ is the distribution that arises from sampling each possible directed edge independently with probability $p$ (resulting in $\Pr(G) = p^e(1-p)^{n(n-1) - e}$) and then conditioning on acyclicity. The distribution $D(n,1/2)$ is equal to the uniform distribution over DAGs on $n$ vertices. Note that random DAGs can also be obtained by sampling edges from an upper triangular matrix and uniformly permuting vertex labels. However, this process places a bias on DAGs with many automorphisms and cannot be seen as a natural extension of the uniform distribution. For  ADMGs, a uniformly random ADMG is obtained by sampling a uniformly random DAG, and then adding a bidirected edge for each pair of vertices independently with probability $1/2$. A uniformly random DCG is obtained by placing each possible directed edge independently with probability $1/2$.

\subsection{Markov Equivalence}
The following definitions hold for all three model classes (DAGs, ADMGs, DCGs) alike. Given a path $\pi = (v_1, \dots, v_k)$, the vertex $v_i$ is a \emph{collider} on the path if there are two incoming arrows from $v_{i-1}$ and $v_{i+1}$ to $v_i$, that is, one of the following holds true: $v_{i-1} \to v_i \leftarrow v_{i+1}$, $v_{i-1} \to v_i \leftrightarrow v_{i+1}$, $v_{i-1} \leftrightarrow v_i \leftarrow v_{i+1}$, $v_{i-1} \leftrightarrow v_i \leftrightarrow v_{i+1}$. Given two vertices $v,w \in V(G)$ and a \emph{conditioning set} $Z \subseteq V(G) \setminus \{v,w\}$, a path $\pi$ from $v$ to $w$ is \emph{active given $Z$} if every non-collider vertex on the path is not in $Z$ and every collider vertex on the path is either in $Z$ or has a descendant in $Z$. If there is an active path between $v,w$ given $Z$, then $v$ and $w$ are said to be d-connected given $Z$; otherwise they are d-separated given $Z$. Two graphs $G_1$ and $G_2$ on the same vertex set $V$ are \emph{Markov-equivalent} if for all $v,w \in V$ and $Z \subseteq V \setminus \{v,w\}$, $v,w$ are d-connected given $Z$ in $G_1$ if and only if they are d-connected given $Z$ in $G_2$. The set of all graphs that are Markov-equivalent to a graph $G$ is called the \emph{Markov equivalence class (MEC)} of $G$. The significance of Markov equivalence in causal inference stems from the fact that two causal models represented by Markov-equivalent graphs encode the same conditional independence structure. The connection between this property and the graphical definition of Markov-equivalence through d-separation has first been formalized for DAGs by \citet{d-separation, Geiger90}. Later, it has been shown that the same connection holds for ADMGs \citep{Spirtes_ADMGs, Koster_ADMGs} and DCGs   \citep{DCGs_d-separation} (at least for linear models). Hence, for these models, Markov-equivalent graphs cannot be distinguished based on observing conditional dependence and independence relations.

For DAGs, we will additionally make use of the following result: Let us call an edge $v \to w$ of a DAG $G$ \emph{reversible} if replacing the edge by $w \to v$ results in another DAG $G'$ that is Markov equivalent to $G$. Then, reversible edges have a simple characterization: 

\begin{lemma}{\citep{chickering95}}\label{lem:reversible}
    An edge $v \to w$ of a DAG $G$ is reversible if and only if $\Pa(v) = \Pa(w) \setminus \{v\}$.
\end{lemma}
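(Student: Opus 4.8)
The plan is to reduce the statement to the Verma--Pearl characterization of Markov equivalence for DAGs: two DAGs on the same vertex set are Markov-equivalent if and only if they have the same skeleton (the underlying set of adjacencies) and the same set of v-structures, i.e.\ triples $a \to b \leftarrow c$ with $a,c$ non-adjacent. Write $G'$ for the graph obtained from $G$ by replacing $v \to w$ with $w \to v$. Reversing a single edge never changes the skeleton, so $G$ and $G'$ always have the same adjacencies. Hence $v \to w$ is reversible exactly when two things hold: (i) $G'$ is acyclic, and (ii) $G$ and $G'$ have the same v-structures. Since only the orientation of the pair $\{v,w\}$ changes, the only v-structures that can possibly differ are those using this edge: in $G$ they are the triples $v \to w \leftarrow a$ with $a \in \Pa(w)\setminus\{v\}$ and $a$ non-adjacent to $v$, and in $G'$ they are the triples $w \to v \leftarrow a$ with $a \in \Pa(v)$ and $a$ non-adjacent to $w$. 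The whole argument then turns on comparing these two families under the hypothesis $\Pa(v) = \Pa(w)\setminus\{v\}$.

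For the direction $(\Leftarrow)$ I would assume $\Pa(v) = \Pa(w)\setminus\{v\}$ and verify (i) and (ii). For acyclicity: any cycle in $G'$ must traverse the new edge $w \to v$, hence yields a directed path from $v$ to $w$ in $G$ of length at least two; its penultimate vertex lies in $\Pa(w)\setminus\{v\} = \Pa(v)$, so it points back to $v$ and produces a directed cycle already in $G$, a contradiction. For the v-structures: every $a \in \Pa(w)\setminus\{v\}$ is a parent of $v$, hence adjacent to $v$, which kills all candidate v-structures $v \to w \leftarrow a$ in $G$; symmetrically every $a \in \Pa(v)$ is a parent of $w$, hence adjacent to $w$, which kills all candidate v-structures $w \to v \leftarrow a$ in $G'$. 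Thus neither graph has a v-structure on the edge $\{v,w\}$, the v-structure sets coincide, and $G'$ is Markov-equivalent to $G$.

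For the direction $(\Rightarrow)$ I would argue the contrapositive: if $\Pa(v) \neq \Pa(w)\setminus\{v\}$, some vertex $a$ witnesses the asymmetry, and I split on which side it lies and on its adjacency to the opposite endpoint. If $a \in \Pa(w)\setminus\{v\}$ but $a \notin \Pa(v)$, then either $a$ is non-adjacent to $v$, so $v \to w \leftarrow a$ is a v-structure of $G$ that is destroyed in $G'$ (violating (ii)), or $v \to a$, so $v \to a \to w$ together with the reversed edge $w \to v$ creates a cycle in $G'$ (violating (i)). Symmetrically, if $a \in \Pa(v)$ but $a \notin \Pa(w)\setminus\{v\}$, then either $a$ is non-adjacent to $w$, so $w \to v \leftarrow a$ is a v-structure created in $G'$ (violating (ii)), or $w \to a$, which with $a \to v$ and $v \to w$ would already be a cycle in $G$ and is therefore impossible; so only the v-structure case survives. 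In every case $v \to w$ fails to be reversible.

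The only real subtlety, which I would treat as the main obstacle, is that ``reversible'' bundles two requirements together, acyclicity of $G'$ and preservation of v-structures, and the single condition $\Pa(v)=\Pa(w)\setminus\{v\}$ must be shown to control both simultaneously. Concretely, in the case analysis one must carefully distinguish, for each offending parent $a$, whether it is non-adjacent to the opposite endpoint (an immorality obstruction) or adjacent to it (a cycle obstruction), and then check that the ``adjacent'' subcases genuinely force a directed cycle rather than hiding a subtler v-structure change. Getting this dichotomy right is where the argument needs the most care.
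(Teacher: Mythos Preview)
The paper does not give its own proof of this lemma; it simply cites \citet{chickering95} and uses the result as a black box. Your argument via the Verma--Pearl skeleton/v-structure characterization is correct and is essentially the standard proof (Chickering's original argument proceeds the same way). The case analysis in the contrapositive direction is handled carefully: in particular, you correctly observe that the subcase $a\in\Pa(v)$, $a\notin\Pa(w)$, $a$ adjacent to $w$ is vacuous because it would force a cycle $v\to w\to a\to v$ in $G$ itself.
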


\subsection{Tower Decomposition of DAGs}
\begin{figure}
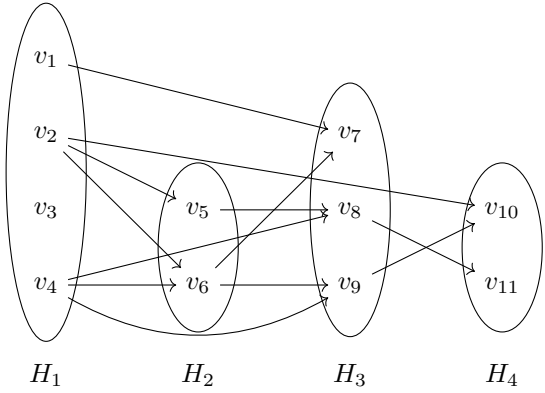

    \centering
    \include{tower}
    \caption{A DAG on $11$ vertices with tower decomposition $(H_1, H_2, H_3, H_4)$.}
    \label{fig:tower}
\end{figure}

The definitions and notation in this subsection are from \citet{SS24}. We define the \emph{tower decomposition} of a directed acyclic graph $G$ as follows: Let $H_1(G)$ be the set of sources in $G$, $H_2(G)$ the set of sources in $G \setminus H_1(G)$, $H_3(G)$ the set of sources in $G \setminus (H_1(G) \cup H_2(G))$, and so on. This partitions the vertex set $V(G)$ into sets $H_1, H_2 \dots, H_{s(G)}$, where $s(G)$ is the length of the longest directed path in $G$ (we suppress the dependence on $G$ if it is clear from the context). See Figure~\ref{fig:tower} for an example. We call the sets $H_i$ the \emph{layers} of $G$ and denote their sizes by $h_i := |H_i|$ for $i=1, \dots, s$. The vector $h(G) = (h_1, \dots, h_{s})$ is called the \emph{tower vector} of $G$. The tower decomposition $(H_1, \dots, H_s)$ of $G$ is characterized by two properties: first, edges of $G$ can only go from a layer $H_i$ to a layer $H_j$ with $j>i$, in particular, there are no edges within the layers themselves. Secondly, each vertex in $H_i$ must have at least one parent in $H_{i-1}$ (except the vertices in $H_1$, which are exactly the vertices in $G$ that have no parents).We use tower decompositions to obtain a useful description of the distribution $D(n,p)$: 

\begin{lemma}\label{lem:tower-distr}
    Let $G \sim D(n,p)$, and fix a tower vector $h = (h_1, \dots, h_s)$ with entries that sum up to $n$, and a tower decomposition $H = (H_1 \dots, H_s)$ with $|H_i| = h_i$. Then, the following holds:
    
    \begin{enumerate}[label=(\alph*)]
        \item the probability that $h$ is the tower vector of $G$ is proportional to 
    \begin{align*}
        w(h) = \binom{n}{h_1, \dots, h_s} \prod_{k=2}^s\frac{\left(1 - (1-p)^{h_{k-1}}\right)^{h_k}}{(1-p)^{h_k \sum_{i=1}^{k-1}h_i}}; 
    \end{align*} \label{item:a}
        \item conditional on the event that $H$ is the tower decomposition of $G$, the parent sets $\Pa(v)$ are independently distributed for all $v \in V(G)$, and for $v \in H_j$ and $S \subseteq \bigcup_{i=1}^{j-1}H_i$, we have 
        \begin{align*}
            \Pr\left(\Pa(v) = S\right) = \begin{cases}
                C \cdot \left(\frac{p}{1-p}\right)^{|S|} \text{ if } S \cap H_{j-1} \neq \emptyset\\
                0 \text{ otherwise,}
            \end{cases}
        \end{align*}
        where $C$ is the normalization constant. \label{item:b}
    \end{enumerate}
\end{lemma}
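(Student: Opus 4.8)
The plan is to reduce both parts to a single weight computation by exploiting the combinatorial characterization of the tower decomposition stated above. First I would establish that a DAG $G$ has $(H_1, \dots, H_s)$ as its tower decomposition if and only if (i) every edge runs from a lower layer to a strictly higher layer and (ii) every vertex in $H_k$ with $k \geq 2$ has at least one parent in $H_{k-1}$. I would verify this by induction on the source-removal process: under (i) and (ii) the sources of $G$ are exactly $H_1$, and deleting $H_1$ leaves a graph whose sources are exactly $H_2$, and so on. The payoff is twofold. Any graph satisfying (i) is automatically acyclic, since edges never return to a lower layer, so conditioning on acyclicity is free here; and a graph with tower decomposition $H$ is specified precisely by an independent choice, for each vertex $v \in H_k$, of a parent set $\Pa(v) \subseteq \bigcup_{i<k} H_i$ subject to $\Pa(v) \cap H_{k-1} \neq \emptyset$ (with $\Pa(v) = \emptyset$ forced for $v \in H_1$).

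The heart of the argument is then to sum the weight $w(G) = (p/(1-p))^{s(G)}$, where $s(G) = \sum_v |\Pa(v)|$, over all DAGs with a fixed tower decomposition $H$. Because $w(G)$ factorizes as $\prod_v (p/(1-p))^{|\Pa(v)|}$ and the parent-set choices are independent across vertices, the total weight factorizes as a product over vertices of the per-vertex sum $\sum_S (p/(1-p))^{|S|}$ taken over the valid parent sets $S$. For $v \in H_k$ this sum is computed by the binomial theorem together with inclusion--exclusion on the event $S \cap H_{k-1} \neq \emptyset$:
\[
\sum_{\substack{S \subseteq \bigcup_{i<k} H_i \\ S \cap H_{k-1} \neq \emptyset}} \left(\frac{p}{1-p}\right)^{|S|} = (1-p)^{-\sum_{i<k} h_i}\left(1 - (1-p)^{h_{k-1}}\right),
\]
obtained by subtracting the subsets that avoid $H_{k-1}$ entirely from all subsets of $\bigcup_{i<k} H_i$. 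Raising this to the power $h_k$ (one factor per vertex of $H_k$) and multiplying over $k = 2, \dots, s$ yields the total weight $W(H)$ of all DAGs with tower decomposition $H$, which is exactly the product appearing in $w(h)$.

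Finally I would deduce the two claims. For part~\ref{item:a}, the crucial observation is that $W(H)$ depends only on the layer sizes $h_i$, not on which vertices occupy which layer; since there are $\binom{n}{h_1, \dots, h_s}$ labeled decompositions with the prescribed sizes, summing $W(H)$ over them multiplies by this multinomial coefficient, giving $w(h)$ up to the global normalization of $D(n,p)$. For part~\ref{item:b}, dividing $w(G)$ by $W(H)$ gives the conditional law of $G$ given its tower decomposition; since both factorize over vertices, the parent sets are conditionally independent, and for each $v \in H_j$ the conditional probability of $\Pa(v) = S$ is $(p/(1-p))^{|S|}$ divided by the per-vertex normalizer, i.e.\ $C\,(p/(1-p))^{|S|}$ on valid $S$ and $0$ otherwise. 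I expect the main obstacle to be purely bookkeeping: cleanly proving the if-and-only-if characterization of the tower decomposition and then keeping the normalization constants straight while passing between the joint weight, the layer-size marginal, and the per-vertex conditional. The only genuine computation, the binomial/inclusion--exclusion sum above, is routine.
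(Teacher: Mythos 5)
Your proposal is correct and follows essentially the same route as the paper's proof: both rest on the characterization that $G$ has tower decomposition $H$ iff each $v\in H_k$ has $\Pa(v)\subseteq\bigcup_{i<k}H_i$ with $\Pa(v)\cap H_{k-1}\neq\emptyset$, factorize the total weight over independent per-vertex parent-set choices, and evaluate the same geometric/inclusion--exclusion sum (the paper phrases it via differences of binomial coefficients, which is the identical computation). The only cosmetic difference is that you prove the characterization directly by the source-removal induction, whereas the paper cites it from Talvitie.
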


Here, $\binom{n}{h_1, \dots, h_s}$ denotes the multinomial coefficient. Lemma~\ref{lem:tower-distr} generalizes Lemma 3.1 of \citet{SS24} and essentially follows as a special case of Lemma 3 and equation (2) of \citet{talvitie}. Still, for completeness, we give a full proof of Lemma~\ref{lem:tower-distr} in Appendix~\ref{apx:dags}. Note that tower decompositions have also been studied under different names in the literature, such as root layerings \citep{talvitie} and DAG partitions \citep{Kuipers2015, Kuipers02012017}. 

For uniformly random DAGs, we will additionally make use of some results of \citet{SS24}. Let $\sigma(G)$ be the graph obtained from $G$ by keeping only the edges between adjacent layers $H_i$ and $H_{i+1}$ for all $i$. We call this graph the \emph{tower} of $G$. Conditioning on the tower of $G$ drastically simplifies the distribution of a uniformly random DAG: 

\begin{lemma}{\citep{SS24}}\label{lem:tower_conditioning}
    Let $G$ be a uniformly random DAG on $n$ vertices and condition on the tower $\sigma(G) = \sigma$. Then, the edges of $G$ between two non-adjacent layers occur independently with probability $1/2$ each. 
\end{lemma}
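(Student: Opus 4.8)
The plan is to identify the conditional law of $G$ given $\sigma(G) = \sigma$ explicitly and then read off the claimed independence. Since $G \sim D(n,1/2)$ assigns every DAG on $n$ vertices equal weight, the conditional distribution of $G$ given any value $\sigma(G)=\sigma$ is simply uniform over the fiber $\{G' : \sigma(G') = \sigma\}$. The whole argument therefore reduces to describing this fiber combinatorially: I will show it is exactly the set of graphs obtained from $\sigma$ by adding an arbitrary subset of edges between non-adjacent layers.

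First I would check that the graph $\sigma$ already encodes the layer partition $(H_1,\dots,H_s)$: running the source-peeling procedure on $\sigma$ itself recovers the same layers, because $\sigma$ retains every edge between adjacent layers and deletes only forward edges that skip layers. Consequently, conditioning on $\sigma(G)=\sigma$ fixes both the partition into layers and all adjacent-layer edges of $G$, while leaving the edges between non-adjacent layers (pairs $u \in H_i$, $v \in H_j$ with $j \geq i+2$) unconstrained a priori.

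Next I would establish the two inclusions characterizing the fiber. For one direction, any DAG $G'$ with tower decomposition $(H_1,\dots,H_s)$ has all of its edges directed from lower to higher layers, so $G'$ differs from $\sigma$ only in edges between non-adjacent layers; combined with the previous paragraph this shows $G'$ is $\sigma$ together with some set of such edges. For the converse---and this is the step I expect to be the main obstacle---I would show that adding \emph{any} set $F$ of forward edges between non-adjacent layers to $\sigma$ yields a DAG whose tower decomposition is still $(H_1,\dots,H_s)$. Acyclicity is immediate since all edges point to strictly higher layers. Layer-preservation is the delicate point: writing $\layer_\sigma$ and $\layer_G$ for the layer indices in $\sigma$ and in $G = \sigma \cup F$, I have $\layer_G(v) \geq \layer_\sigma(v)$ because $G \supseteq \sigma$, and I would prove the reverse inequality by induction on $\layer_\sigma(v)$: every parent $u$ of $v$ in $G$ lies in a strictly lower layer, so by the inductive hypothesis $\layer_G(u) = \layer_\sigma(u) \le \layer_\sigma(v) - 1$, forcing $\layer_G(v) \le \layer_\sigma(v)$. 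This confirms that each such $F$ gives a distinct element of the fiber and that these exhaust it.

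With the fiber identified as $\{\sigma \cup F : F \subseteq E_{\mathrm{long}}\}$, where $E_{\mathrm{long}}$ is the set of possible edges between non-adjacent layers, the uniform conditional law over the fiber is precisely the uniform measure on subsets $F \subseteq E_{\mathrm{long}}$, which is the same as including each edge of $E_{\mathrm{long}}$ independently with probability $1/2$. This is exactly the claim. As a shortcut I would also note that the conclusion drops out of Lemma~\ref{lem:tower-distr}\ref{item:b} at $p = 1/2$: there the factor $(p/(1-p))^{|S|}$ equals $1$, so conditional on the layer partition each $\Pa(v)$ is uniform over subsets meeting $H_{j-1}$, independently across $v$; further conditioning on the adjacent-layer edges (equivalently, on $\Pa(v)\cap H_{j-1}$) factors out and leaves $\Pa(v) \cap (H_1 \cup \dots \cup H_{j-2})$ uniform, i.e.\ each non-adjacent-layer edge present independently with probability $1/2$.
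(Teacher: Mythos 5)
Your proof is correct. One caveat on the comparison: the paper does not prove Lemma~\ref{lem:tower_conditioning} at all---it imports it from \citet{SS24}---so the only in-paper material to measure you against is Lemma~\ref{lem:tower-distr}, whose proof in Appendix~\ref{apx:dags} works by summing weights over parent-set choices. You effectively give two independent derivations. The first is a self-contained fiber argument: you correctly verify the two nontrivial combinatorial facts, namely that the tower graph $\sigma$ determines the layer partition (via re-running source-peeling on $\sigma$, using that every $v \in H_k$ keeps a parent in $H_{k-1}$ inside $\sigma$), and that adding an arbitrary set of layer-skipping forward edges to $\sigma$ leaves the decomposition unchanged (your two-sided layer-index argument, with monotonicity under edge addition in one direction and induction on $\layer_\sigma(v)$ in the other, is sound). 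This identifies the fiber with the power set of $E_{\mathrm{long}}$, and uniformity of $D(n,1/2)$ does the rest. The second derivation---specializing Lemma~\ref{lem:tower-distr}\ref{item:b} to $p=1/2$, where $\Pa(v)$ is uniform over subsets meeting $H_{j-1}$ independently across $v$, and then conditioning on $\Pa(v)\cap H_{j-1}$---is the route most consonant with this paper's machinery, and your observation that conditioning on $\sigma(G)=\sigma$ is the same as conditioning on the partition together with all adjacent-layer parent sets is exactly the point that makes it rigorous. The bijective argument buys a transparent structural picture of the fiber (and would generalize to conditioning on any tower); the Lemma~\ref{lem:tower-distr} route buys brevity and extends in principle to general $p$, where the fiber is no longer uniform but the non-adjacent-layer edges are still independent with probability $p$. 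One pedantic point worth making explicit: ``edges between non-adjacent layers'' must be read as forward edges $H_i \to H_j$ with $j \ge i+2$, since backward and within-layer edges are excluded by the decomposition---your fiber description handles this implicitly, but it deserves a sentence.
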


The following lemma gives a tail bound on the layer sizes in a uniformly random DAG $G$. 

\begin{lemma}{\citep{SS24}}\label{lem:tower_size}
    Let $G$ be a uniformly random DAG on $n$ vertices. Let $i, \ell \in \mathbb{N}$, $\ell \geq 5$, and $n$ sufficiently large. Then, we have $\Pr[h_i(G) \geq \ell] \leq 2^{-\ell^2/4 + 2}$.
\end{lemma}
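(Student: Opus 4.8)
The plan is to work entirely at the level of tower vectors, via Lemma~\ref{lem:tower-distr}(a) specialized to $p=1/2$. Since then $p=1-p=1/2$, substituting gives $\Pr[h(G)=h]\propto w(h)$ with
\[
 w(h)=\binom{n}{h_1,\dots,h_s}\,2^{\sum_{j<k}h_jh_k}\prod_{k=2}^{s}\bigl(1-2^{-h_{k-1}}\bigr)^{h_k}.
\]
Because $\sum_{j<k}h_jh_k=\tfrac12\bigl(n^2-\sum_k h_k^2\bigr)$, the dominant factor $2^{(n^2-\sum_k h_k^2)/2}$ strongly rewards spreading the $n$ vertices over many small layers, and a single layer of size $m$ is exactly what it penalizes. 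I would turn this penalty into a tail bound by building a weight-increasing injection that cuts a large layer in two.

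Fix the index $i$ and call a tower vector \emph{bad} if $h_i\ge\ell$. For a bad $h$ with $m:=h_i$, define $\phi(h)$ by replacing the single entry $h_i=m$ with two consecutive entries $a=\lceil m/2\rceil$ and $b=\lfloor m/2\rfloor$, i.e.\ by splitting layer $i$ into two near-equal layers; this is again a positive composition of $n$, hence a valid tower vector with $w(\phi(h))>0$. The map $\phi$ is injective: since $i$ is fixed, $h$ is recovered from $\phi(h)$ by merging its layers $i$ and $i+1$. Tracking $w$, the multinomial coefficient is multiplied by $\binom{m}{a}\ge1$, and since $\sum_k h_k^2$ drops by exactly $m^2-a^2-b^2=2ab$, the dominant factor is multiplied by $2^{ab}=2^{\lfloor m^2/4\rfloor}\ge 2^{\lfloor \ell^2/4\rfloor}$. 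Thus the full gain of order $2^{\ell^2/4}$ comes for free from the $2^{\sum_{j<k}h_jh_k}$ term, leaving only the correction product to be controlled.

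The correction product changes under $\phi$ by the factor
\[
 R=\bigl(1-2^{-h_{i-1}}\bigr)^{-b}\bigl(1-2^{-a}\bigr)^{b}\Bigl(\tfrac{1-2^{-b}}{1-2^{-m}}\Bigr)^{h_{i+1}},
\]
and bounding $R$ below is the main obstacle. The first factor is $\ge1$ and the second is close to $1$ for $m\ge\ell\ge5$, but the third is delicate: if layer $i+1$ is much larger than $2^{m/2}$ while $m$ is near $\ell$, then reattaching $H_{i+1}$ to the smaller predecessor of size $b$ costs more than a constant and the plain split ceases to be weight-increasing. I would dispatch this in two regimes. When the layers neighboring $i$ have size $O(2^{m/2})$ (the typical case), elementary estimates using $m\ge5$ give $R\ge\tfrac14$, hence $w(\phi(h))\ge 2^{\ell^2/4-2}w(h)$; summing over bad $h$ and using injectivity of $\phi$ with $\sum_{h'}w(h')\ge\sum_{\text{bad}}w(\phi(h))$ then yields $\Pr[h_i\ge\ell]\le 2^{-\ell^2/4+2}$, the factor $4$ of slack accounting for the ``$+2$''. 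The atypical tower vectors, in which a neighbor of layer $i$ is enormous, are absorbed by a crude separate estimate: the factor $\bigl(1-2^{-h_i}\bigr)^{h_{i+1}}$ in $w(h)$ already makes their total weight negligible against $2^{-\ell^2/4}$ times the overall weight. Finally, the boundary cases $i=1$ and $i=s$ are easier, since then the predecessor factor, respectively the successor factor $(\cdot)^{h_{i+1}}$, is simply absent.
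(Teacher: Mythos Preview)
The paper does not prove this lemma; it is quoted from the cited reference and used as a black box, so there is no in-paper argument to compare against. Your splitting-injection strategy is nonetheless the same device the paper deploys for the closely related Lemma~\ref{lem:upper-tailbd}, so the overall approach is well aligned with the paper's toolkit.

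The genuine gap is your handling of the ``atypical'' case. When $h_i=m=\ell$ and $h_{i+1}\gg 2^{\ell/2}$, the reattachment loss $\bigl(\tfrac{1-2^{-b}}{1-2^{-m}}\bigr)^{h_{i+1}}$ overwhelms the $2^{ab}$ gain, so your map $\phi$ is \emph{not} weight-increasing on such vectors. Your proposed remedy---``the factor $(1-2^{-h_i})^{h_{i+1}}$ in $w(h)$ already makes their total weight negligible''---is not an argument: that factor sits inside $w(h)$, but to bound a probability you must compare $w(h)$ to the partition function $\sum_{h'}w(h')$, and you have supplied no such comparison (nor a second injection for these vectors). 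The natural completion, namely restricting to $h_j\ge\ell>h_{j+1}$ and telescoping $\Pr[h_i\ge\ell]\le\sum_{j\ge i}\Pr[h_j\ge\ell>h_{j+1}]$ exactly as the paper does in its proof of Lemma~\ref{lem:upper-tailbd}, makes each split work (since the successor factor is then controlled by $h_{j+1}<\ell$), but it costs a factor of $n$ in the final bound. That extra factor would be harmless for the paper's sole application of the lemma, where $\ell=\Theta(n)$, yet it does not reproduce the $n$-free bound $2^{-\ell^2/4+2}$ as stated; recovering that presumably requires the sharper argument from the cited source, which is not reproduced in this paper.
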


\section{Sparse Random DAGs}\label{sec:DAGs}

Our approach towards bounding the size of the Markov equivalence class of a sparse random DAG $G$ is through bounding the number of reversible edges of $G$. By Lemma~\ref{lem:reversible} an edge $v \to w$ is reversible if and only if the parent sets of $v$ and $w$ align. If we think about $G$ in terms of its tower decomposition, vertices that lie in higher-order layers have many more potential parents, so an edge between such vertices should be less likely to be reversible (in fact, \citet{SS24} derive upper bounds on the size of MECs in uniformly random DAGs by making this intuition formal). However, if we consider an edge $v \to w$ where $w$ lies in the second layer $H_2(G)$ (i.e. the set of vertices that are children of a source vertex), then $v$ is necessarily a source vertex, so $\Pa(v) = \emptyset$. But this implies that $v \to w$ is reversible whenever $w$ has no parents except $v$. Since $w$ lies in the second layer of $G$, it can only have parents in the first layer, so checking that $w$ has no connection to another source vertex suffices to establish reversibility of $v \to w$. In the following, we call an edge $v \to w$ with $w \in H_2(G)$ a \emph{layer-2-edge}. For a uniformly random DAG, the first two layers contain with high probability only a few vertices, however, this is not the case for a sparse random DAG. In fact, we will show that the first two layers of a sparse random DAG $G$ are with high probability quite large, which leads to the existence of many reversible layer-2-edges. This by itself does not yet suffice to give a significant lower bound on the size of the MEC of $G$. Indeed, two reversible edges need not be independently reversible, that is, after reversing one edge, the other one might become irreversible. However, suppose we have an edge set $S \subseteq E(G)$, which forms a \emph{matching} of reversible edges, i.e.\ a set of reversible edges with disjoint endpoints. Then, reversing an edge in $S$ only changes the parent sets of its endpoints and leaves all other parent sets the same. But this implies that all other edges in $S$ must remain reversible by Lemma~\ref{lem:reversible}. Hence, all possible combinations of edge reversals for edges in $S$ lead to another Markov equivalent DAG. We formulate this as the following observation: 

\begin{obs}\label{obs:key}
    Let $G$ be a DAG, and let $S \subseteq E(G)$ be a matching of reversible edges in $G$. Then, the size of the Markov equivalence class of $G$ is at least $2^{|S|}$.
\end{obs}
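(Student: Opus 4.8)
The plan is to exhibit an injection from the power set $2^S$ into $\MEC(G)$. For a subset $T \subseteq S$, let $G_T$ denote the directed graph obtained from $G$ by reversing the orientation of every edge in $T$ and leaving all other edges untouched. Since $S$ is a matching, the edges of $T$ have pairwise disjoint endpoints, so this operation is unambiguous and the order in which the edges are reversed is irrelevant; thus $G_T$ is a well-defined directed graph. I would show that each $G_T$ is a DAG that is Markov-equivalent to $G$, and that the assignment $T \mapsto G_T$ is injective; together these give $|\MEC(G)| \geq |2^S| = 2^{|S|}$.

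The core of the argument is the following claim, proved by induction on $|T|$: for every $T \subseteq S$, the graph $G_T$ is a DAG Markov-equivalent to $G$, and moreover every edge in $S \setminus T$ is reversible in $G_T$. The base case $T = \emptyset$ is immediate. For the inductive step, pick any $e = (v \to w) \in T$ and set $T' = T \setminus \{e\}$. By the inductive hypothesis, $G_{T'}$ is a DAG Markov-equivalent to $G$ in which every edge of $S \setminus T'$ — in particular $e$ itself, since $e \in S \setminus T'$ — is reversible. Reversing the reversible edge $e$ therefore produces, by the definition of reversibility, a DAG $G_T$ that is Markov-equivalent to $G_{T'}$, and hence to $G$ by transitivity of Markov equivalence.

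It remains, still within the inductive step, to argue that the remaining matching edges stay reversible in $G_T$; this is where the matching hypothesis does the real work, and is the step I expect to be the main (if modest) obstacle. Reversing $e = (v \to w)$ alters only the parent sets $\Pa(v)$ and $\Pa(w)$ and leaves $\Pa(u)$ unchanged for every other vertex $u$. Now take any edge $(x \to y) \in S \setminus T$. Because $S$ is a matching and $e, (x \to y)$ are distinct edges of $S$, the endpoint pairs $\{x,y\}$ and $\{v,w\}$ are disjoint, so both $\Pa(x)$ and $\Pa(y)$ are identical in $G_{T'}$ and in $G_T$. Since $(x \to y)$ was reversible in $G_{T'}$, Lemma~\ref{lem:reversible} gives $\Pa(x) = \Pa(y) \setminus \{x\}$ in $G_{T'}$, and the same identity therefore holds in $G_T$; by Lemma~\ref{lem:reversible} again, $(x \to y)$ is reversible in $G_T$. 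This completes the induction, and in particular establishes that every $G_T$ is a DAG in $\MEC(G)$.

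Finally I would check injectivity. Suppose $T \neq T'$ and pick $e = (v \to w)$ in their symmetric difference, say $e \in T \setminus T'$. Since $S$ is a matching, $e$ is the only edge of $S$ incident to $v$ or $w$, so the orientation of the edge between $v$ and $w$ is determined solely by whether $e$ was reversed: it is $w \to v$ in $G_T$ but $v \to w$ in $G_{T'}$. Hence $G_T \neq G_{T'}$. The map $T \mapsto G_T$ is thus an injection from $2^S$ into $\MEC(G)$, and the bound $|\MEC(G)| \geq 2^{|S|}$ follows.
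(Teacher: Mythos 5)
Your proof is correct and follows essentially the same approach as the paper, which justifies the observation informally in the preceding paragraph: reversing one matching edge changes only the parent sets of its two endpoints, so by Lemma~\ref{lem:reversible} all other matching edges stay reversible, and hence all $2^{|S|}$ reversal patterns yield distinct Markov-equivalent DAGs. Your version merely makes the induction and the injectivity check explicit.
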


Now, the key insight of this section is that we can find a large matching of reversible edges in a sparse random graph $G$ by only looking at its layer-2-edges. 

\begin{prop}\label{prop:DAGs}
    Let $6/n \leq p \leq o\left(\frac{1}{\log n}\right)$ and $G \sim D(n,p)$. Then, with probability $1-o(1)$, $G$ has a matching of reversible layer-2-edges of size at least $\frac{p^{-1}}{16e^5\log^2(p^{-1})}$.
\end{prop}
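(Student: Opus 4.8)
The plan is to build the matching of Observation~\ref{obs:key} entirely from layer-2 edges. The key point is that a layer-2 edge $v\to w$ is automatically reversible as soon as $w$ has $v$ as its only parent: then $v$ is a source, so $\Pa(v)=\emptyset=\Pa(w)\setminus\{v\}$ and Lemma~\ref{lem:reversible} applies. Hence it suffices to exhibit $\Omega(p^{-1}/\log^2(p^{-1}))$ vertices $w\in H_2$ with $|\Pa(w)|=1$ whose unique parents are pairwise distinct; the associated edges then form a matching of reversible edges. Throughout I would condition on the tower decomposition $H=(H_1,\dots,H_s)$ and use part (b) of Lemma~\ref{lem:tower-distr}: conditionally, the parent sets are mutually independent, and for $w\in H_2$ the set $\Pa(w)$ is a nonempty subset of $H_1$ drawn with probability proportional to $(p/(1-p))^{|\Pa(w)|}$. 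In particular $\Pr(|\Pa(w)|=1)=q(h_1):=\frac{h_1\,p\,(1-p)^{h_1-1}}{1-(1-p)^{h_1}}$, and given this event the parent is uniform on $H_1$ and independent across distinct $w$.

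The core of the argument is to show that, with probability $1-o(1)$, the first two layers have the right size: $h_1$ and $h_2$ are both $\Omega(p^{-1}/\log^2(p^{-1}))$, while $h_1 p$ stays bounded, so that $q(h_1)=\Omega(1)$ and a constant fraction of layer-2 vertices have a unique parent. I would extract this from part (a) of Lemma~\ref{lem:tower-distr}. Setting $\alpha=-\ln(1-p)$, the tower weight rewrites as $w(h)\propto\binom{n}{h_1,\dots,h_s}\,e^{-\frac{\alpha}{2}\sum_i h_i^2}\prod_{k\ge2}(1-e^{-\alpha h_{k-1}})^{h_k}$, and the task is to locate where its mass sits. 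Two competing forces pin down $h_1$: the factor $(1-e^{-\alpha h_1})^{h_2}$ makes a small $h_1$ (relative to $1/\alpha\approx 1/p$) very costly, since it would make layer-2 vertices improbable, whereas the Gaussian-type penalty $e^{-\alpha h_1^2/2}$ together with the multinomial entropy controls how large $h_1$ can be. Establishing the resulting bounds on $(h_1,h_2)$ with probability $1-o(1)$, \emph{uniformly} over the whole range $6/n\le p\le o(1/\log n)$, is where I expect the real difficulty to lie: conditioning on acyclicity reshapes the layer profile completely (for $p\ge 6/n$ it is already a low-probability conditioning, since $pn\ge 6>1$), so the naive binomial heuristics for the number of sources are badly misleading and a genuine analysis of $w(h)$ is required. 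The logarithmic factors and the constant $16e^5$ in the statement should emerge precisely from the tail estimates needed to make these layer-size bounds hold with high probability.

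Granting the layer-size event, the remainder is routine. Conditioned on the tower, the indicators $\one[|\Pa(w)|=1]$ for $w\in H_2$ are independent by part (b) of Lemma~\ref{lem:tower-distr}, so their sum $N$ concentrates around $h_2\,q(h_1)=\Omega(p^{-1}/\log^2(p^{-1}))$; a Chernoff bound then gives $N=\Omega(p^{-1}/\log^2(p^{-1}))$ with probability $1-o(1)$. The unique parents of these $N$ vertices are i.i.d.\ uniform on $H_1$, and the largest matching of reversible layer-2 edges has size equal to the number of \emph{distinct} values among them (the edges form disjoint stars centered at sources, each contributing one edge to a matching). Since $h_1=\Omega(N)$, a standard occupancy estimate (concentrated via bounded differences) shows this number of distinct parents is $\Omega(N)$ with probability $1-o(1)$. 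A union bound over the three high-probability events — the layer sizes, the concentration of $N$, and the occupancy bound — then produces a matching of reversible layer-2 edges of size $\Omega(p^{-1}/\log^2(p^{-1}))$, as required.
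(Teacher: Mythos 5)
Your overall architecture matches the paper's: condition on the tower decomposition, use Lemma~\ref{lem:tower-distr}\ref{item:b} to get independence of parent sets, observe that a layer-2 edge $v\to w$ is reversible exactly when $\Pa(w)=\{v\}$, lower-bound the single-parent probability by roughly $(1-p)^{h_1}$, and then control parent collisions to extract a matching. However, there is a genuine gap at precisely the point you flag yourself: the whole argument hinges on showing that with probability $1-o(1)$ the layer sizes satisfy $h_1\le 5p^{-1}$ (so that $q(h_1)=\Omega(1)$) and $h_1\ge p^{-1}/(20\log(p^{-1}))$, $h_2\ge p^{-1}/\log^2(p^{-1})$, and you do not establish these bounds --- you only describe the ``competing forces'' in the weight $w(h)$ and concede that the real difficulty lies there. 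These are exactly Lemmas~\ref{lem:upper-tailbd} and~\ref{lem:lower-tailbd} of the paper, and they are not routine: the paper proves them by constructing explicit maps $\varphi_j$ on tower vectors (splitting an oversized first layer into blocks of size $p^{-1}$, or merging undersized initial layers), bounding the weight ratio $w(h)/w(\varphi_j(h))$ by an exponentially small quantity, and controlling the multiplicity of each map. Without either carrying out such an argument or at least invoking those lemmas, the proof is incomplete, since your heuristic about where the mass of $w(h)$ sits is exactly the claim that needs quantitative proof, uniformly over $6/n\le p\le o(1/\log n)$.

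A second, more minor issue: in the occupancy step you assert $h_1=\Omega(N)$ and conclude that the number of distinct parents is $\Omega(N)$. This is not justified: in the sparse regime one can have $q(h_1)$ close to $1$ and $h_2\gg h_1$, so $N$ (which can be as large as $h_2$) need not be $O(h_1)$. The paper sidesteps this by working with a truncated set $\widetilde{H_2}\subseteq H_2$ of size at most $h_1/2$, which makes the collision probability for each vertex at most $1/2$ by a simple union bound. Your conclusion still survives because the number of occupied bins is $\Omega\bigl(\min(N,h_1)\bigr)$, and both $N$ and $h_1$ are $\Omega\bigl(p^{-1}/\log^2(p^{-1})\bigr)$ on the good event, but as written the step ``distinct parents $=\Omega(N)$'' is false without the truncation. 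With the two layer-size lemmas supplied and the truncation (or the $\min(N,h_1)$ correction) inserted, your argument closes and coincides with the paper's proof.
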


\begin{figure}
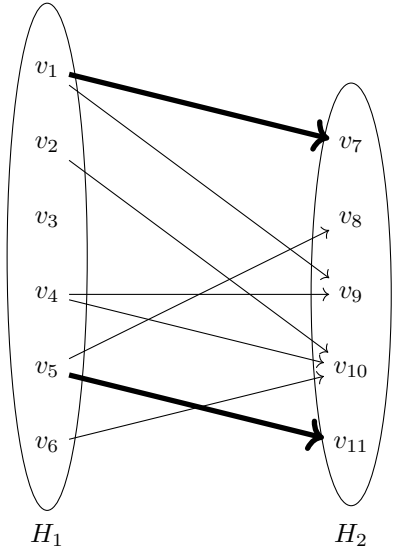

    \centering
    \include{matching}
    \caption{The bold edges form a (maximal, but not unique) matching of reversible layer-2-edges.}
    \label{fig:matching}
\end{figure}

See Figure~\ref{fig:matching} for an example of a matching of reversible layer-2-edges. To prove Proposition~\ref{prop:DAGs}, we crucially rely on the following concentration bounds for the sizes of the first two layers of a sparse random DAG, which are of independent interest:

\begin{lemma} \label{lem:upper-tailbd}
    Let $p = o\left(\frac{1}{\log n}\right)$ and $G \sim D(n,p)$. Then, the number of sources in $G$ is less than or equal to $\frac{5}{p}$ with high probability. 
\end{lemma}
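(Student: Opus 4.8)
The plan is to analyze the marginal law of $h_1=|H_1(G)|$ (the number of sources) directly through the tower-vector weight formula of Lemma~\ref{lem:tower-distr}\ref{item:a}, and to show that this marginal already has a Gaussian-type tail that is $o(1)$ at the value $5/p$.

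First, as orientation, consider the \emph{unconditioned} model in which every directed edge is present independently with probability $p$. There a vertex $v$ is a source exactly when none of its $n-1$ possible in-edges is present, and since distinct vertices involve disjoint sets of in-edges, the events $\{v\text{ is a source}\}$ are mutually independent. Hence the number of sources is distributed as $\mathrm{Bin}(n,(1-p)^{n-1})$, with mean $n(1-p)^{n-1}\le n e^{-p(n-1)}\le \tfrac{2}{e}\,p^{-1}<\tfrac 5p$, and by a Chernoff bound its probability of exceeding $5/p$ is exponentially small in $p^{-1}$. This both shows that $5/p$ is the correct scale and isolates the entire difficulty in the \emph{conditioning on acyclicity}: since acyclicity is a decreasing event and $h_1$ is a decreasing function of the edge set, conditioning can only increase $h_1$ stochastically (by FKG), so soft monotonicity arguments point in the wrong direction and the inflation must be quantified.

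To quantify it, I would work with the exact weights $w(h)$ of Lemma~\ref{lem:tower-distr}\ref{item:a} and set up a weight-increasing injection on tower vectors that decreases $h_1$. Concretely, map $h=(a,b,h_3,\dots,h_s)$ to $h'=(a-1,b+1,h_3,\dots,h_s)$, moving one source into the second layer. This map is injective, and only the $k=2$ and $k=3$ factors of $w$ change, so the weight ratio is
\begin{equation*}
  \frac{w(h')}{w(h)}=\frac{a}{b+1}\cdot\frac{(1-(1-p)^{a-1})^{b+1}}{(1-(1-p)^{a})^{b}}\cdot (1-p)^{\,b+1-a}\cdot\Big(\tfrac{1-(1-p)^{b+1}}{1-(1-p)^{b}}\Big)^{h_3}.
\end{equation*}
The last factor is at least $1$, and once $a$ exceeds the mode the factor $(1-p)^{-a}\ge e^{pa}$ dominates, forcing this ratio to grow. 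Summing $w$ over the injected configurations then yields $\Pr[h_1=a]\le\rho_a\,\Pr[h_1=a-1]$ with $\rho_a\le (1-p)^{\Theta(a)}$ once $a$ exceeds a constant multiple of $p^{-1}$, i.e.\ a tail $\Pr[h_1\ge a]\lesssim e^{-\Theta(pa^2)}$, which at $a=5/p$ is $e^{-\Theta(1/p)}=o(1)$ (matching the unconditioned Chernoff scale).

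The main obstacle is that the gain ratio above is not uniformly large: the competing factor $\tfrac{a}{b+1}(1-p)^{b+1}$ degrades when the second layer $b=h_2$ is itself large, so the per-term bound only gives geometric decay in the region where $h_2=O(p^{-1})$. The crux of the argument is therefore to control this coupling between the first two layers — either by running the same moving-vertex comparison on $h_2$ to show that tower weights concentrate where both $h_1$ and $h_2$ are $O(p^{-1})$, or by summing the ratio over $b$ and $h_3$ and showing that the dominant contributions to $W_a:=\sum_{h:\,h_1=a}w(h)$ come from that region. Once the decay of $W_a/W_{a-1}$ is established for $a\ge 5/p$, the lemma follows by summing the resulting (super-)geometric tail. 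I expect this layer-coupling step, rather than the edge-probability estimates, to be where the real work lies; the hypothesis $p=o(1/\log n)$ enters precisely to guarantee that the mode sits at $\Theta(p^{-1})\ll n$, so that the multinomial coefficients behave as the above heuristic assumes.
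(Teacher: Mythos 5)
Your setup is sound --- the weight formula from Lemma~\ref{lem:tower-distr}\ref{item:a} is the right object, your single-vertex-move ratio is computed correctly, and your FKG remark correctly explains why the unconditioned binomial heuristic cannot be transferred directly. But the proposal has a genuine gap, and it sits exactly where you flag it. The ratio $w(h')/w(h)$ for the move $(a,b,\dots)\mapsto(a-1,b+1,\dots)$ contains the competing factor $\frac{a}{b+1}(1-p)^{b+1}$ against the gain $(1-p)^{-a}\approx e^{pa}$; for $a=5p^{-1}$ and, say, $b=10p^{-1}$ this product is about $\tfrac12 e^{-10}e^{5}<1$, so the injection is \emph{not} weight-increasing on all of $\{h_1\ge 5p^{-1}\}$, and the claimed recursion $W_a\le\rho_a W_{a-1}$ with $\rho_a\le(1-p)^{\Theta(a)}$ does not follow. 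Your two proposed repairs are not carried out, and the first one (run the same comparison on $h_2$) inherits the same problem one layer down: the move from layer $2$ to layer $3$ picks up a factor $(1-p)^{-(h_1+h_2-h_3-1)}$ type term coupling to $h_3$, so it is not clear the cascade terminates. As written, the proof establishes the easy region ($h_2=O(p^{-1})$) and leaves the hard region open.

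The paper closes this gap with a structurally different comparison. Instead of moving one vertex to the adjacent layer, it takes the \emph{first} index $j$ with $h_j\ge 5p^{-1}\ge h_{j+1}$ and splits the oversized layer $h_j$ into roughly $h_jp$ sublayers of size $p^{-1}$ each. The cross-terms $(1-p)^{\sum_{i<k}\ell_i\ell_k}$ then contribute a gain of about $e^{-2h_j}$, which beats the losses $(20/7)^{h_j}$ from the numerator factors and $2^{h_{j+1}}\le 2^{h_j}$ from the next layer --- and this last bound is available precisely because the argument conditions on $j$ being the first index where the layer size drops below the threshold, then union-bounds over $j$ (paying only a polynomial factor $n^2p$, together with an $np$ bound on the multiplicity of the splitting map). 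That device is what decouples the oversized layer from its successor and is the missing ingredient in your plan. If you want to salvage your one-vertex-at-a-time injection, you would need an a priori bound showing that tower vectors with $h_1\ge 5p^{-1}$ and $h_2\gg p^{-1}$ carry negligible total weight, which is essentially the same lemma you are trying to prove, shifted by one layer.
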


\begin{lemma} \label{lem:lower-tailbd}
    Let $\frac{6}{n} \leq p \leq o\left(\frac{1}{\log n}\right)$ and $G \sim D(n,p)$. Then, with high probability, we have $h_1(G) \geq \frac{p^{-1}}{20 \log(p^{-1})}$ and $h_2(G) \geq \frac{p^{-1}}{\log^2(p^{-1})}$.
\end{lemma}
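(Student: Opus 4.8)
The two bounds are lower bounds on the sizes of the first two layers, and I would route everything through the single quantity $\rho_m := \Pr_{D(m,p)}(\text{a fixed vertex is a source})$. Let $Z_m := \sum_{G}\left(\frac{p}{1-p}\right)^{|E(G)|}$ denote the normalizing constant of $D(m,p)$, the sum running over all labeled DAGs on $m$ vertices. A one-line counting argument---a DAG in which a fixed set $T$ receives no incoming edge is exactly an arbitrary DAG on the complement together with an arbitrary set of edges leaving $T$---gives the exact identity
\[
\Pr_{D(n,p)}\big(T \subseteq \text{sources}\big) = \frac{Z_{n-|T|}}{Z_n}(1-p)^{-|T|(n-|T|)} = (1-p)^{\binom{|T|}{2}}\prod_{j=0}^{|T|-1}\rho_{n-j}.
\]
In particular $\E[h_1(G)] = n\rho_n$, so the first-layer bound splits into (i) a lower bound on $\rho_n$ and (ii) concentration of $h_1$ around its mean.

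For (ii) the factorization does the work. Taking $|T| = 2$ gives $\Pr(u,v\text{ both sources}) = (1-p)\rho_n\rho_{n-1}$, hence exactly
\[
\E[h_1^2] = \E[h_1] + n(n-1)(1-p)\rho_n\rho_{n-1}.
\]
Provided the consecutive ratio is controlled, $\rho_{n-1}\le (1+p)\rho_n$---a smoothness estimate consistent with the scaling $\rho_m \asymp 1/(pm)$ and which I would prove alongside (i)---the right-hand side is $(1+o(1))\E[h_1]^2$, so Chebyshev gives $h_1 \ge \tfrac12\E[h_1]$ with high probability once $\E[h_1] = n\rho_n \to\infty$. Everything therefore reduces to showing $\rho_n \ge \frac{1}{10\,pn\,\log(p^{-1})}$, i.e.\ $\E[h_1]\ge \frac{p^{-1}}{10\log(p^{-1})}$; this is complementary to the matching upper bound behind Lemma~\ref{lem:upper-tailbd}.

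The lower bound on $\rho_n$ is the crux and the step I expect to be hardest. Equivalently I must upper bound the ratio $Z_n/Z_{n-1} = \E_{D(n-1,p)}[W]$, where $W$ is the weighted number of acyclic ways to attach a new vertex; classifying the attachment by its set of children $B$ and noting that the new parents may be any non-descendants of $B$ yields
\[
W = (1-p)^{-(n-1)}\sum_{B\subseteq V}\left(\frac{p}{1-p}\right)^{|B|}(1-p)^{|\Desc(B)|}.
\]
The term $B=\emptyset$ alone produces the factor $(1-p)^{-(n-1)}$ and the far-too-weak bound $\rho_n \gtrsim (1-p)^{n-1}\approx e^{-pn}$ (this is exactly what a product-measure / FKG comparison gives, and it is useless once $pn$ is large). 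The genuine content is that conditioning on acyclicity inflates the number of sources from the product-measure value $\approx n e^{-pn}$ up to $\Theta(1/p)$; to capture this I must show the reachability structure of a typical sparse DAG is rich enough that the displayed sum is only $O\!\big(pn\cdot\polylog(p^{-1})\big)$ rather than its trivial order $e^{pn}$. This reachability-expansion estimate---controlling $\E[(1-p)^{|\Desc(B)|}]$ across all set sizes $|B|$ simultaneously---is most naturally handled through the tower decomposition of Lemma~\ref{lem:tower-distr}, and the interval $6/n\le p\le o(1/\log n)$ is exactly the window in which $pn$ is large enough for descendant sets to grow yet $p$ is small enough that $\E[h_1]\to\infty$.

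For $h_2$ I would condition on the source set $H_1$, now known to satisfy $h_1 \ge \frac{p^{-1}}{20\log(p^{-1})}$ with high probability, and run the same analysis one level lower. A vertex $v\notin H_1$ lies in $H_2$ precisely when it has at least one parent and all its parents lie in $H_1$; setting up the two-step analogue of the identity above, and using the conditional independence of parent sets granted by Lemma~\ref{lem:tower-distr}(b), the probability factors into an acyclicity-conditioned ``no parent outside $H_1$'' term---which again carries the $1/p$ scaling---times the attachment factor $1-(1-p)^{h_1}$. Since $h_1 \asymp p^{-1}/\log(p^{-1})$ we have $1-(1-p)^{h_1}\asymp p\,h_1 \asymp 1/\log(p^{-1})$, and this single factor is exactly the extra $\log(p^{-1})$ separating the $h_2$ bound from the $h_1$ bound. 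Summing over the $\Theta(n)$ candidate vertices $v$ and concentrating as in (ii) then yields $h_2 \ge \frac{p^{-1}}{\log^2(p^{-1})}$ with high probability.
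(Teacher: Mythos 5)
Your reduction is set up correctly --- the identity $\Pr(T\subseteq\text{sources}) = \frac{Z_{n-|T|}}{Z_n}(1-p)^{-|T|(n-|T|)}$ checks out, as does the second-moment computation --- and it is a genuinely different route from the paper, which never introduces the partition functions $Z_m$ but works entirely with the tower-vector weights $w(h)$ of Lemma~\ref{lem:tower-distr}, part~\ref{item:a}: there each ``bad'' tower vector (with $h_1$ too small) is mapped to a ``good'' one by merging its first layers, the weight ratio is shown to be exponentially small while the map is at most $2^{p^{-1}/5}$-to-one, and one sums. However, your proposal has a genuine gap: the estimate you yourself identify as the crux, namely $\rho_n\ge \frac{1}{10\,pn\log(p^{-1})}$, equivalently $\E_{D(n-1,p)}\bigl[\sum_B (p/(1-p))^{|B|}(1-p)^{|\Desc(B)|}\bigr] = O\bigl(pn\log(p^{-1})\bigr)$, is asserted but never proved, and it carries essentially all of the difficulty of the lemma. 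The trivial bound on this sum is $(1-p)^{-(n-1)}\approx e^{pn}$, which suffices only when $pn=O(1)$; once $pn\to\infty$ the terms with $|B|=k\ge 2$, whose naive total is $(pn)^k/k!$, must be beaten down by showing that $|\Desc(B)|\gtrsim k\log(pn)/p$ for most $k$-sets $B$ simultaneously --- a quantitative reachability statement about a $D(n-1,p)$ sample that is not easier than the lemma itself and that you defer back to ``the tower decomposition.'' The auxiliary smoothness bound $\rho_{n-1}\le(1+p)\rho_n$ is likewise unproven; it amounts to controlling the log-convexity increments of $Z_m$ to relative precision $O(p)$, which again requires knowing $Z_m/Z_{m-1}$ far more precisely than anything you have established.

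The $h_2$ part has the same character. Your intuition for the extra $\log(p^{-1})$ is right --- it is exactly the per-vertex factor $1-(1-p)^{h_1}\asymp 1/\log(p^{-1})$ appearing in $w(h)$ --- but conditioning on $H_1$ being precisely the source set does not leave $G[V\setminus H_1]$ distributed as $D(n-h_1,p)$: the law of the remaining tower vector is the $D(n-h_1,p)$ law exponentially tilted by $(1-(1-p)^{h_1})^{h_2}$, so ``run the same analysis one level lower'' needs its own argument rather than a citation of level one, and the tilting couples the events $\{v\in H_2\}$ through the normalization in a way your per-vertex factorization ignores. In short, the skeleton (exact identities plus Chebyshev) is sound and clean, but the load-bearing wall --- the lower bound on $\rho_n$ and its level-two analogue --- is missing; the paper's weight-comparison injection is precisely the device that supplies it.
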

Lemma~\ref{lem:upper-tailbd} follows by showing that a tower vector $h$ with $h_1 > 5p^{-1}$ has exponentially small weight $w(h)$ compared to the vector that is obtained from $h$ by splitting $h_1$ into equal parts of size $p^{-1}$. Here, $w(h)$ is given by  Lemma~\ref{lem:tower-distr}, part~\ref{item:a}. Similarly, Lemma~\ref{lem:lower-tailbd} follows from showing that a vector $h$ with $h_1 < p^{-1}/(20 \log(p^{-1}))$ has exponentially small weight compared to the vector that is obtained from $h$ by merging its first layers until their size exceeds $p^{-1}/5$ (and similarly for $h_2$). We provide the full details of this calculation in Appendix~\ref{apx:dags}. In light of these concentration bounds, the statement of Proposition~\ref{prop:DAGs} becomes more approachable: Given a DAG $G$ sampled from $D(n,p)$, we simply need to show that there is at least a small constant fraction of vertices in $H_2(G)$ that have only one parent in $H_1(G)$. As these parents should roughly be scattered randomly with just a few overlaps, most of these vertices together with their single parent then form a matching of reversible edges. But $H_2(G)$ contains $p^{-1}/\log^2(p^{-1})$ vertices with high probability, so the matching we found has the desired size. We give a full proof of Proposition~\ref{prop:DAGs} in Appendix~\ref{apx:dags}, which, together with Observation~\ref{obs:key}, immediately implies Theorem~\ref{thm:DAGs}.

\section{Acyclic Directed Mixed Graphs}\label{sec:ADMGs}
For DAGs it is known that any two Markov equivalent graphs must have the same edge adjacencies. This is not true anymore for ADMGs. Let us call an edge $v \to w$ of an ADMG $G$ \emph{underdetermined} if deleting it results in an ADMG that is Markov equivalent to $G$. Our approach towards proving Theorem~\ref{thm:MECs}, part~\ref{item:admg} is to show that a uniformly random ADMG $G$ contains many underdetermined edges that can be deleted or included independently of each other while preserving Markov equivalence. The following graph provides an example for an underdetermined edge: 

\begin{defn}
    We define the ADMG $S$ on an ordered set of three vertices $(v_1, v_2, v_3)$ as the graph with edges $v_1 \to v_2$, $v_2 \to v_3$, and $v_2 \leftrightarrow v_3$. Moreover, we denote the graph $S \cup \{v_1 \to v_3\}$ as $\overline{S}$, see Figure~\ref{fig:admg}.
\end{defn}

\begin{figure}
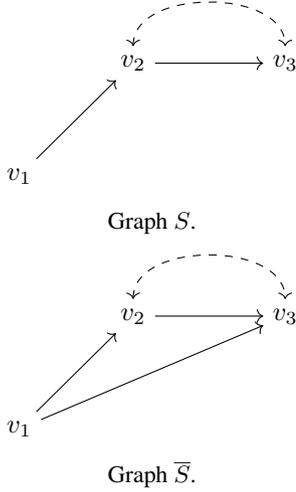

    \captionsetup[subfigure]{labelformat=empty}
    \begin{subfigure}{\linewidth}
    \centering
    \tikz{
    \node (v1) at (0,0) {$v_1$};
    \node (v2) at (1.5,1.5) {$v_2$};
    \node (v3) at (3.5,1.5) {$v_3$};

    \draw[->] (v1) to (v2);
    \draw[->] (v2) to (v3);
    \draw[dashed, <->] (v2) to[out=90,in=90] (v3);
    }
    \caption{Graph $S$.}
    \end{subfigure}
    \begin{subfigure}{\linewidth}
    \centering
    \tikz{
    \node (v1) at (0,0) {$v_1$};
    \node (v2) at (1.5,1.5) {$v_2$};
    \node (v3) at (3.5,1.5) {$v_3$};

    \draw[->] (v1) to (v2);
    \draw[->] (v2) to (v3);
    \draw[dashed, <->] (v2) to[out=90,in=90] (v3);
    \draw[->] (v1) to (v3);
    }
    \caption{Graph $\overline{S}$.}
    \end{subfigure}
    \caption{The edge from $v_1$ to $v_3$ is underdetermined: it does not introduce any new d-connections, so $S$ and $\overline{S}$ are Markov equivalent.}
    \label{fig:admg}
\end{figure}

It is easy to check that the edge $v_1 \to v_3$ is underdetermined in $\overline{S}$. A key observation is that this remains true even when $\overline{S}$ is part of a larger graph structure. 

\begin{lemma}\label{lem:admg-underdetermined}
    Let $G$ be an ADMG that contains $\overline{S}$ as a (labeled) subgraph on the vertices $(v_1, v_2, v_3)$. Then, the edge $v_1 \to v_3$ is underdetermined in $G$. 
\end{lemma}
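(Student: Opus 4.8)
The plan is to show that deleting the edge $v_1 \to v_3$ from $G$ yields an ADMG $G'$ that has exactly the same d-connection relation as $G$. First I would record the structural observation that drives everything: deleting $v_1 \to v_3$ does not change the descendant relation of any vertex. Indeed, the directed path $v_1 \to v_2 \to v_3$ survives in $G'$, so $v_3$ and all its descendants remain descendants of $v_1$; more generally any directed path that uses the edge $v_1 \to v_3$ can be rerouted through $v_1 \to v_2 \to v_3$, so the ordered reachability relation, and hence the sets $\Desc(u)$ for every vertex $u$, is identical in $G$ and $G'$. This matters because the activation of a collider depends only on whether the collider or one of its descendants lies in the conditioning set. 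The same observation shows $G'$ remains acyclic, so $G'$ is a valid ADMG.

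For the easy direction, I would argue that d-connection in $G'$ implies d-connection in $G$: since $E(G') \subseteq E(G)$ and the descendant relation is unchanged, any path that is active given $Z$ in $G'$ is also a path in $G$ whose collider/non-collider pattern and activation conditions are identical, hence it is active in $G$.

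For the converse direction, I would take any path $\pi$ from $x$ to $y$ that is active given $Z$ in $G$. If $\pi$ avoids the edge $v_1 \to v_3$, it is already a path in $G'$ and stays active by the same reasoning. Otherwise I replace each traversal of the edge $v_1 \to v_3$ by a two-edge detour through $v_2$, choosing between the directed detour $v_1 \to v_2 \to v_3$ and the confounded detour $v_1 \to v_2 \leftrightarrow v_3$; because paths may be self-intersecting, this substitution is legitimate even when $v_2$ already occurs on $\pi$. The crux is the selection rule: on the directed detour $v_2$ is a non-collider (active iff $v_2 \notin Z$), whereas on the confounded detour $v_2$ is a collider (active iff $v_2 \in Z$ or $v_2$ has a descendant in $Z$). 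These two activation conditions together exhaust all cases, so at least one detour is active at $v_2$. Both detours use only edges present in $G'$.

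It then remains to verify that rerouting does not disturb $\pi$ at the junction vertices, and this is the step I expect to require the most careful bookkeeping. The key point is that both detours leave a tail at $v_1$ and present an arrowhead at $v_3$ on the side facing the rest of the path, exactly as the original edge $v_1 \to v_3$ did; consequently the collider/non-collider status of $v_1$ and of $v_3$ is unchanged in all four cases (edge traversed forward or backward, times the two detour choices), and every other vertex keeps its status and, by the descendant invariance above, its activation. Hence the rerouted path is active in $G'$, establishing the converse and therefore Markov equivalence of $G$ and $G'$. The main obstacle is precisely the case $v_2 \in Z$, which blocks the naive directed detour; the presence of the bidirected edge $v_2 \leftrightarrow v_3$ is exactly what rescues the argument by supplying a collider-based alternative.
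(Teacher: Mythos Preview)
Your proof is correct and follows essentially the same approach as the paper: delete the edge, observe that descendants are unchanged, and reroute any active path using $v_1 \to v_3$ through $v_2$ via either $v_1 \to v_2 \to v_3$ (when $v_2 \notin Z$) or $v_1 \to v_2 \leftrightarrow v_3$ (when $v_2 \in Z$). Your write-up is in fact more careful than the paper's on the junction-vertex bookkeeping and the self-intersection point, but the argument is the same.
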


\begin{proof}
    Let $G$ be an ADMG containing $\overline{S}$ as a labeled subgraph on $(v_1, v_2, v_3)$ and let $G'$ be the graph obtained from deleting the edge $v_1 \to v_3$ from $G$. Since $G'$ is a subgraph of $G$, any two vertices $a, b$ that are d-separated given $Z$ in $G$ are also d-separated given $Z$ in $G'$. Now, assume $a$ and $b$ are d-connected in $G$ given $Z$. First, note that each vertex has the same descendants in $G$ and $G'$, since $G'$ still contains a directed path from $v_1$ to $v_3$. Hence, if $\pi$ is an active path from $a$ to $b$ given $Z$ in $G$ that does not contain the edge $v_1 \to v_3$, then $\pi$ is also an active path from $a$ to $b$ given $Z$ in $G'$. Otherwise, if $\pi$ is an active path from $a$ to $b$ given $Z$ in $G$ that contains the edge $v_1 \to v_3$, then we distinguish two cases: \\
    First, if $v_2 \notin Z$, then replacing the edge $v_1 \to v_3$ in $\pi$ by the path $v_1 \to v_2 \to v_3$ gives an active path in $G'$. Secondly, if $v_2 \in Z$, then replacing the edge $v_1 \to v_3$ in $\pi$ by the path $v_1 \to v_2 \leftrightarrow v_3$ gives an active path in $G'$. In all cases, we get that $a$ and $b$ are also $d$-connected given $Z$ in $G'$, so $G$ and $G'$ are Markov equivalent. 
\end{proof}

This leads to the following corollary: 
\begin{corollary} \label{cor:admg-underdetermined}
    Let $G$ be an ADMG on $n$ vertices and suppose there are $m$ different subsets of vertices $V_1, \dots, V_m$ such that $|V_i| = 3$ and the induced graph $G[V_i]$ contains a copy of $S$ or of $\overline{S}$ for all $i$. Then, $|\MEC(G)| \geq 2^{m/(3n)}$.
\end{corollary}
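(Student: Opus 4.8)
The plan is to turn each of the $m$ local structures into an independent binary edge-choice that preserves the Markov equivalence class, and then extract a large subfamily of these choices that do not interfere with one another. For each $V_i$, write its three vertices in the order $(v_1^{(i)}, v_2^{(i)}, v_3^{(i)})$ that realizes a copy of $S$ or $\overline{S}$, and call $e_i = v_1^{(i)} \to v_3^{(i)}$ the \emph{long edge} of $V_i$ and $a_i = v_1^{(i)} \to v_2^{(i)}$, $b_i = v_2^{(i)} \to v_3^{(i)}$, $c_i = v_2^{(i)} \leftrightarrow v_3^{(i)}$ its \emph{structural edges}. By Lemma~\ref{lem:admg-underdetermined} applied in both directions, as long as $a_i, b_i, c_i$ are all present, toggling $e_i$ moves $G$ to a Markov-equivalent ADMG: if $e_i$ is present we delete the underdetermined edge of a copy of $\overline{S}$; if $e_i$ is absent, adding it creates a copy of $\overline{S}$ whose long edge may again be deleted. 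Thus each $V_i$ furnishes one edge whose presence is free.

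First I would make these free choices simultaneous. Fix any subfamily $T \subseteq \{1, \dots, m\}$ and consider the $2^{|T|}$ ADMGs obtained from $G$ by independently deciding the presence of each $e_i$, $i \in T$, while leaving every other edge (in particular all structural edges $a_j, b_j, c_j$) untouched. Provided that within $T$ no long edge $e_i$ ever coincides with a structural edge $a_j$ or $b_j$ of another selected triple — and a bidirected $c_j$ can never equal a directed $e_i$ — every structural triple stays intact in all of these graphs, so each single toggle is a legal Markov-equivalence-preserving move by Lemma~\ref{lem:admg-underdetermined}. Chaining such toggles shows all $2^{|T|}$ graphs lie in one MEC; since the $e_i$ are then distinct edges, the graphs are pairwise distinct, giving $|\MEC(G)| \geq 2^{|T|}$. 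Acyclicity is preserved throughout because $e_i$ runs parallel to the always-present directed path $v_1^{(i)} \to v_2^{(i)} \to v_3^{(i)}$, so toggling it changes no reachabilities.

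The remaining task — and the only real obstacle — is to choose $T$ large while guaranteeing the non-interference condition above. The key observation is that interference between $V_i$ and $V_j$ can occur only through an \emph{equality of two directed edges} ($e_i = e_j$, or $e_i \in \{a_j, b_j\}$, or $e_j \in \{a_i, b_i\}$), and any such equality forces $V_i$ and $V_j$ to share both endpoints of that edge, i.e.\ $|V_i \cap V_j| \geq 2$. I would therefore build a conflict graph $\Gamma$ on $\{1, \dots, m\}$ joining $i \sim j$ whenever $|V_i \cap V_j| \geq 2$, so that the non-interference condition holds automatically on any independent set of $\Gamma$. A fixed triple shares at least two vertices with at most $\binom{3}{2}(n-3) < 3n$ other distinct triples, so $\Gamma$ has maximum degree below $3n$; a greedy independent set $T$ then has size at least $m/(3n)$. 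Combining this with the previous paragraph yields $|\MEC(G)| \geq 2^{|T|} \geq 2^{m/(3n)}$.
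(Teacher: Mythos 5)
Your proposal is correct and follows essentially the same route as the paper: select a subfamily of at least $m/(3n)$ triples that pairwise share at most one vertex (via the degree-at-most-$3n$ conflict/intersection bound), observe that the resulting copies of $S$/$\overline{S}$ are edge-disjoint, and toggle the $m/(3n)$ underdetermined long edges independently using Lemma~\ref{lem:admg-underdetermined}. Your write-up is somewhat more explicit about non-interference, distinctness of the resulting graphs, and preservation of acyclicity, but the underlying argument is the same.
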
 

\begin{proof}
Given the setup as in the Corollary, note that each subset $V_i$ only intersects with at most $3(n-3)$ other subsets in two vertices. Hence, we can select at least $m/(3n)$ different subsets $V_i$ that pairwise intersect in at most one vertex. However, this implies that the copies of $S$ and $\overline{S}$ on these vertex sets are all edge-disjoint. In particular, deleting or adding the underdetermined edge in these copies does not influence the other copies, so we can add or delete an edge for each of the sub-selected vertex sets independently while preserving Markov-equivalence. This implies $|\MEC(G)| \geq 2^{m/(3n)}$.
\end{proof}

All that is left to do in order to prove Theorem~\ref{thm:MECs}, part~\ref{item:admg} is to show that in a uniformly random ADMG $G$, a constant fraction of all possible vertex sets of size $3$ is expected to have a copy of $S$ or $\overline{S}$.

\begin{proofof}{Theorem~\ref{thm:MECs}, part~\ref{item:admg}}
    Let $G$ be a uniformly random ADMG on $n$ vertices. Note that the distribution of $G$ can be described by sampling a uniformly random DAG and adding a bidirected edge for each pair of vertices independently with probability $1/2$. We extend the notion of tower decompositions to ADMGs and denote by $\sigma(G)$ the tower of the DAG formed by the directed edges of $G$. Let $\cA$ be the set of towers $\sigma$ that only have layers of size at most $n/48$. For each $\sigma \in \cA$, let $\cV_\sigma$ be the set of all vertex subsets of size $3$ that do not contain two vertices in adjacent layers. We have $|\cV_\sigma| \geq \frac{1}{6} \cdot n \cdot \frac{23}{24}n \cdot \frac{11}{12}n \geq n^3/7.$ By Lemma~\ref{lem:tower_conditioning}, after conditioning on $\sigma(G) = \sigma$, the edges induced  by each $W \in \cV_\sigma$ occur independently with probability $1/2$ each, so we get $\Pr(W \text{ contains a copy of $S$ or $\overline{S}$} \mid \sigma(G) = \sigma) \geq 1/8$. Then, if $X$ denotes the number of vertex sets $W \in \cV_\sigma$ that contain a copy of $S$ or $\overline{S}$, we get $\E[X \mid \sigma(G) = \sigma] \geq n^3/56$. By Corollary~\ref{cor:admg-underdetermined} and Jensen's inequality, we deduce
    \begin{align*}
    \E\left[|\MEC(G)| \middle | \sigma(G) = \sigma\right] &\geq \E\left[2^{X/(3n)} \middle | \sigma(G) = \sigma \right]\\ &\geq 2^{n^2/168}.
    \end{align*}

    After summing over all $\sigma \in \cA$, we conclude using Lemma~\ref{lem:tower_size}: 
    \begin{align*}
        \E\left[|\MEC(G)|\right] &\geq \Pr\left(\sigma(G) \in \cA\right) \cdot 2^{n^2/168}\\
        &= \Pr\left( h_i \leq n/48 \text{ for all $i$}\right) \cdot 2^{n^2/168}\\
        &\geq (1 - n \cdot 2^{-n^2/9216+2}) \cdot 2^{n^2/168}\\ 
        &= 2^{\Omega(n^2)}.
    \end{align*}
\end{proofof}

We remark that it is straightforward to show tight concentration of the random variable $X$, counting the occurrence of underdetermined edges in $G$ by Chebychev's inequality. This means that in fact almost all ADMGs on $n$ vertices have at least $2^{\Omega(n^2)}$ Markov-equivalent graphs.

\section{Directed Cyclic Graphs}\label{sec:DCGs}

Already \citet{richardson-cycles} noticed that the direction of cycles in a DCG is underdetermined within its Markov equivalence class. We extend this observation by giving a formal construction for how to reverse any cycle in a DCG while preserving Markov equivalence. 

\begin{defn} \label{defn:reverse}
    Let $G$ be a DCG with a cycle $C = (v_1, \dots, v_k)$. We construct the graph $H = \textsc{Reverse}(G, C)$ as follows:
    \begin{itemize}
        \item Take a copy of $G$ and reverse the cycle $C$, i.e. replace the edge $v_i \to v_{i+1}$ by $v_{i} \leftarrow v_{i+1}$ for all $i \in [k]$;
        \item For each vertex $v_i \in C$, and each vertex $w \in \Pa(v_i) \setminus \{v_{i-1}\}$, delete the edge $w \to v_i$, and replace it by $w \to v_{i-1}$.
    \end{itemize}
\end{defn}

We state the key property of this construction:  

\begin{prop}\label{prop:ME-reverse}
    Let $G$ be a directed cyclic graph containing a cycle $C$. Then, the graph $H = \textsc{Reverse}(G, C)$ is Markov equivalent to $G$. 
\end{prop}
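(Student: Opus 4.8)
The plan is to prove Markov equivalence directly from the d-separation definition, by showing that $G$ and $H$ d-connect exactly the same pairs $a,b$ relative to the same conditioning set $Z$. I will argue in terms of \emph{walks} rather than simple paths: since a d-connecting walk can always be shortened to a d-connecting path, it suffices to produce, for every active walk between $a$ and $b$ in one graph, an active walk between the same endpoints in the other. Moreover, I will prove only one direction. The operation \textsc{Reverse}\ is essentially involutive: if $\overleftarrow{C}$ denotes the cycle $C$ traversed in the opposite direction (a genuine cycle of $H$, since \textsc{Reverse}\ flips every edge of $C$), then $\textsc{Reverse}(H,\overleftarrow{C}) = G$ — reversing $\overleftarrow{C}$ restores the original edge directions, and each external parent that was shifted from $v_i$ to $v_{i-1}$ is shifted back. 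Hence it is enough to show that every active walk in $G$ yields an active walk in $H$; applying the same statement to $H$ and $\overleftarrow{C}$ then gives the converse.

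The first ingredient is a structural invariant: every vertex has the same descendants in $G$ and in $H$, i.e.\ $\Desc_G(v) = \Desc_H(v)$ for all $v$. This holds because $C$ is strongly connected in both graphs (reversing a cycle leaves a cycle), because edges out of $C$ and edges among non-cycle vertices are untouched, and because every redirected edge $w \to v_i$ still points into $C$ (now at $v_{i-1}$), whence all of $C$ and all its external descendants remain reachable. This invariant is exactly what is needed to transfer collider activations: for two cycle vertices $v_i, v_{i-1}$, a collider located at $v_i$ is active given $Z$ if and only if a collider located at $v_{i-1}$ is, because $v_i$ and $v_{i-1}$ are mutual descendants within the cycle and share the same descendant set.

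With this in hand I perform surgery on an active walk $\pi$ of $G$. Edges of $\pi$ not incident to $C$ are copied verbatim, since they are present in $H$. The rest is handled by two gadgets. \emph{Gadget A (external parents):} an edge $w \to v_i$ with $w \neq v_{i-1}$ is replaced by the relocated edge $w \to v_{i-1}$; the arrow still points away from $w$, so the collider/non-collider status at $w$ is unchanged, and the only effect is to move any collider from $v_i$ to $v_{i-1}$, whose activation is preserved by the descendant invariant. \emph{Gadget B (cycle edges):} a forward traversal along $C$ in $G$ is re-realised as a backward traversal along the reversed cycle in $H$, which exists because $\overleftarrow{C}$ is strongly connected.

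The main obstacle — and the part needing the most care — is conditioning on vertices of the cycle. When $Z \cap V(C) = \emptyset$, Gadget B simply routes around the reversed cycle through non-colliders, none of which lie in $Z$, so activity is immediate. When $Z$ does contain cycle vertices, a pure chain-traversal of the reversed cycle is blocked at such a vertex, and one must instead exploit that this very membership activates the relocated colliders of Gadget A: an entry $a \to v_i$ that passed straight through $v_i$ in $G$ is re-expressed in $H$ by attaching at $v_{i-1}$ and using the reversed cycle edge $v_i \to v_{i-1}$ to form a collider at $v_{i-1}$, which is active precisely because a descendant (a cycle vertex in $Z$) is being conditioned on. The crux of the write-up is therefore a clean case analysis that, for each maximal excursion of $\pi$ through $C$, selects between the ``go around'' realisation and the ``bounce off an activated collider'' realisation according to whether $Z$ meets the relevant portion of the cycle, verifying that no non-collider lands in $Z$ and that every introduced collider is activated. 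Concatenating the transformed pieces yields an active walk in $H$ between $a$ and $b$, which shortens to an active path; combined with the involution, this establishes that $G$ and $H$ are Markov equivalent.
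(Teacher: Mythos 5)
Your proposal follows essentially the same route as the paper's proof: the involution $\textsc{Reverse}(H,\overleftarrow{C})=G$ to reduce to one direction, the invariance of descendant sets to transfer collider activation, and the same two path surgeries (replace $w\to v_i$ by $w\to v_{i-1}\leftarrow v_i$ when $Z$ meets $C$, and by the detour $w\to v_{i-1}\to\cdots\to v_i$ around the reversed cycle when it does not), with the same global case split on whether $Z\cap V(C)=\emptyset$. The only differences are cosmetic (walks versus simple paths, and you defer the collider/non-collider bookkeeping that the paper carries out explicitly), so the approach is correct and matches the paper.
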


This result and the entire $\textsc{Reverse}$ construction is based on the following intuition: consider a segment $v_{i-1} \to v_i \to v_{i+1}$ of a cycle $C$ in $G$ together with an incoming edge $w \to v_i$, see Figure~\ref{fig:constr}. If we only reverse the orientation of $C$ without changing the edge $w \to v_i$, then $w$ becomes d-separated from $v_{i-1}$ given $\{v_i, v_{i+1}\}$ in the resulting graph, while it was d-connected to $v_{i-1}$ given $\{v_i, v_{i+1}\}$ in $G$. This motivates connecting $w$ to $v_{i-1}$ instead of $v_i$ in $H = \rgc$. Crucially, this new edge does not introduce any new d-connections, since $w$ and $v_{i-1}$ are in fact d-connected in $G$ given any conditioning set $Z$: if $Z$ contains a vertex of $C$, then $w \to v_i \leftarrow v_{i-1}$ is an active path, and if $Z$ and $C$ are disjoint, then $w \to v_i \to v_{i+1} \to \dots v_{i-1}$ (following the cycle) is an active path. By the same argument, $w$ and $v_i$ are d-connected given any set $Z$ in the new graph $H$, so deleting the edge $w \to v_i$ when going from $G$ to $H$ does not introduce any new d-separations. By a careful case analysis, one can extend this argumentation to a full proof of Markov equivalence between $G$ and $H$, see Appendix~\ref{apx:dcgs}.

\begin{figure}
    \captionsetup[subfigure]{labelformat=empty}
    \begin{subfigure}{\linewidth}
    \centering
    \tikz{
    \node (v1) at (0,0) {$v_{i-1}$};
    \node (v2) at (1.5,0) {$v_i$};
    \node (v3) at (2.5,-1) {$v_{i+1}$};
    \node (w) at (2, 1) {$w$};
    \node (x) at (0.5, -1) {$\dots$};

    \draw[->] (v1) to (v2);
    \draw[->] (v2) to (v3);
    \draw[->] (w) to (v2);
    \draw[->] (v3) to (x);
    \draw[->] (x) to (v1);
    }
    \caption{Cycle segment of $G$.}
    \end{subfigure}
    \begin{subfigure}{\linewidth}
    \centering
    \tikz{
    \node (v1) at (0,0) {$v_{i-1}$};
    \node (v2) at (1.5,0) {$v_i$};
    \node (v3) at (2.5,-1) {$v_{i+1}$};
    \node (w) at (2, 1) {$w$};
    \node (x) at (0.5, -1) {$\dots$};

    \draw[->] (v2) to (v1);
    \draw[->] (v3) to (v2);
    \draw[->] (w) to (v1);
    \draw[->] (x) to (v3);
    \draw[->] (v1) to (x);
    }
    \caption{Corresponding segment of the graph $H = \rgc$.}
    \end{subfigure}
    \caption{Illustration of the $\textsc{Reverse}$ construction.}
    \label{fig:constr}
\end{figure}

Now, suppose the DCG $G$ has multiple vertex-disjoint cycles. By construction, the operation $\textsc{Reverse}$ applied to any of the cycles, leaves all other cycles the same. Hence, one can obtain a new Markov-equivalent DCG for each combination of orientations of these cycles. This results in the following corollary:

\begin{corollary}\label{cor:DCG_MECs}
    Let $G$ be a directed cyclic graph that contains $k$ vertex-disjoint cycles of length at least $3$. Then, $|\MEC(G)| \geq 2^k$.
\end{corollary}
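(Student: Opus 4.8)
The plan is to realize the $2^k$ equivalent graphs as the images of $G$ under all possible combinations of cycle reversals. Fix the vertex-disjoint cycles $C_1,\dots,C_k$, each of length at least $3$, and for every subset $A\subseteq\{1,\dots,k\}$ let $G_A$ be the graph obtained from $G$ by applying $\textsc{Reverse}(\,\cdot\,,C_i)$ for each $i\in A$. The first thing I would check is that $G_A$ is well defined, independent of the order of the reversals. The edges modified by $\textsc{Reverse}(\,\cdot\,,C_i)$ are the cycle edges of $C_i$ together with the edges directed into $V(C_i)$, and such an edge is relocated to another edge that is again directed into $V(C_i)$. Since the cycles are vertex-disjoint, each edge is directed into at most one of the $C_i$, so no reversal ever creates an edge pointing into a different cycle; hence each edge is touched by at most one reversal and the operations commute. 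In particular, after reversing any collection of the cycles, every remaining $C_j$ is still present as a genuine cycle, so Proposition~\ref{prop:ME-reverse} continues to apply at each step.

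Granting well-definedness, Markov equivalence of every $G_A$ with $G$ is immediate: a single application of $\textsc{Reverse}$ stays inside the equivalence class by Proposition~\ref{prop:ME-reverse}, and since Markov equivalence is transitive, composing the reversals one cycle at a time keeps us in $\MEC(G)$. Injectivity of $A\mapsto G_A$ is equally clean: for a cycle of length at least $3$ the forward and reversed orientations are distinct edge sets, and because reversing $C_j$ (for $j\neq i$) never alters the cycle edges of $C_i$, the orientation of each $C_i$ in $G_A$ is determined solely by whether $i\in A$. Thus one can read off $A$ from $G_A$, the $2^k$ graphs are pairwise distinct, and this already yields $|\MEC(G)|\ge 2^k$.

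The real content, and the step I expect to be the main obstacle, is upgrading this to the stronger conclusion $|\MEC^*(G)|\ge 2^k$, i.e.\ showing that each $G_A$ has exactly the same adjacencies as $G$. Reversing the cycle edges of $C_i$ plainly preserves the skeleton on $V(C_i)$, but the second clause of $\textsc{Reverse}$ relocates each incoming edge $w\to v_j$ to $w\to v_{j-1}$, which can in principle both create and destroy adjacencies between $w$ and the cycle. I would therefore need an adjacency-preservation argument guaranteeing that, for the cycles under consideration, these relocations leave the undirected skeleton unchanged: either because the chosen cycles receive no edges from outside (so the relocation step is vacuous and $\textsc{Reverse}$ reduces to a pure orientation flip), or via a direct argument that the set of adjacencies incident to each $V(C_i)$ is invariant under the cyclic shift. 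Establishing this skeleton-invariance is where the essential difficulty lies; once it is in hand, every $G_A$ lies in $\MEC^*(G)$ and the bound $|\MEC^*(G)|\ge 2^k$ follows.
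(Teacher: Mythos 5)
The first three-quarters of your proposal -- commutativity of the reversals on vertex-disjoint cycles, Markov equivalence of each $G_A$ via repeated application of Proposition~\ref{prop:ME-reverse}, and injectivity of $A\mapsto G_A$ by reading off the cycle orientations -- is exactly the argument the paper gives for this corollary (its entire justification is the remark that $\textsc{Reverse}$ applied to one cycle leaves the other, vertex-disjoint, cycles unchanged). Your well-definedness and distinctness checks are correct and make that sketch precise; this much cleanly yields $|\MEC(G)|\geq 2^k$.

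However, the step you explicitly leave open -- that each $G_A$ has the \emph{same adjacencies} as $G$, which is what membership in $\MEC^*(G)$ requires -- is a genuine gap, and not one you could have closed by a routine computation: the second clause of Definition~\ref{defn:reverse} relocates every external incoming edge $u\to v_i$ (for $u\notin\{v_{i-1}\}$) to $u\to v_{i-1}$, and this in general changes the skeleton. Concretely, if $C=(v_1,v_2,v_3)$ is a $3$-cycle and the only additional edge is $w\to v_1$, then $\textsc{Reverse}(G,C)$ contains the adjacency $\{w,v_3\}$ but not $\{w,v_1\}$, so it is Markov-equivalent to $G$ yet lies outside $\MEC^*(G)$. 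Neither of the escape routes you mention is available in the intended application: the disjoint $3$-cycles of a random DCG do receive incoming edges from outside, and a vertex's set of cycle-children is generally not invariant under the cyclic shift. Note also that the paper itself never supplies the missing adjacency-preservation argument -- Proposition~\ref{prop:ME-reverse} and Lemma~\ref{lem:reverse-properties} address only d-separation and descendant sets -- so the obstacle you isolated is a real one in the corollary's justification as it stands; repairing it would require either restricting to cycles none of whose vertices has a parent outside the cycle, or a reversal construction that provably fixes the skeleton.
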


This is all we need to prove Theorem~\ref{thm:MECs}, part~\ref{item:dcg}.

\begin{proofof}{Theorem~\ref{thm:MECs}, part~\ref{item:dcg}}
    Let $G$ be a uniformly random DCG on $n$ vertices. Without loss of generality, we may assume that $n$ is divisible by $3$ and partition the vertex set $V$ of $G$ into disjoint sets $V_1, \dots, V_{n/3}$ of size $3$ each. Then, $\Pr(V_i \text{ induces a cycle of length $3$}) = 1/4$. Let $X$ denote the number of vertex sets $V_i$ that induce a cycle. We have $\E[X] = n/12$. By Corollary~\ref{cor:DCG_MECs} and Jensen's inequality, we conclude
    \begin{align*}
    \E\left[|\MEC(G)|\right] &\geq \E\left[2^{X}\right]\\ &\geq 2^{n/12}.
    \end{align*}
\end{proofof}

Again, we remark that it is straightforward to get a tight concentration bound on the number of vertex-disjoint cycles that we are using to bound $|\MEC(G)|$, using Chernoff's inequality. This means that almost all DCGs have exponentially large Markov equivalence classes.

\section{Discussion}\label{sec:Discussion}
Our results indicate significant underdetermination of causal structure from observational data in various settings. Particularly for DAGs, it might come as a surprise that previously known results on Markov equivalence classes being small for dense graphs do not extend to the sparse setting. However, our results are asymptotic in nature, calling for further numerical studies on how the expected size of Markov equivalence classes scales for small graphs with different edge densities. We hope that our theoretical results shed light on some of the sources of underdetermination in causal graphs: For DAGs, our proof of Theorem~\ref{thm:DAGs} could potentially be extended to show that edges are less likely to have determined directions when they appear between earlier layers of the tower decomposition. For ADMGs, we show that, on average, underdetermination arises frequently from small structures that induce underdetermined edges. This issue is addressed in the model class of maximal ancestral graphs (MAGs) through the maximality condition \citep{mags}, which forces underdetermined edges to be included in the graph and therefore makes the set of adjacencies unique again within a Markov equivalence class. As a result, MAGs may exhibit smaller Markov equivalence classes on average, which would be an interesting subject of further research (see \citet{mag-listing} for a first approach). However, the analysis becomes challenging as, to our knowledge, the enumeration of MAGs is an open problem. Note that our proof of Theorem~\ref{thm:MECs}, part~\ref{item:admg} also implies that under the uniform distribution, there is an expected super-exponential number of ADMGs corresponding to just a single MAG. In fact, most edges of a uniformly random ADMG are underdetermined with high probability, raising further questions about the informativeness of MAGs in this setting. For DCGs, addressing underdetermination seems to require some type of condition for directing cycles in light of our proof of Theorem~\ref{thm:MECs}, part~\ref{item:dcg}. 

Apart from ruling out certain graphs through additional assumptions, interventions also help to further distinguish between Markov-equivalent graphs. It would be interesting to explore the expected number of interventions required to uniquely recover a random causal graph (see \citet{interventional-MECs} for a first approach). Based on our results, this question becomes more significant for sparse DAGs, but also for ADMGs and DCGs, where it is not even known how to place interventions to efficiently split the Markov equivalence class. Finally, in practice, statistical uncertainty often prevents the identification of a single Markov equivalence class. Instead, researchers are faced not just with multiple graphs within one equivalence class, but often with sets of equivalence classes. An understanding of which equivalence classes are "close" to each other or how to separate equivalence classes in causal discovery more effectively, would be enormously useful.

\begin{acknowledgements} % will be removed in pdf for initial submission,
						 % (without ‘accepted’ option in \documentclass)
                         % so you can already fill it to test with the
                         % ‘accepted’ class option
    This research was supported by NSF grant CCF-2321079. 

\end{acknowledgements}

% References
\bibliography{refs}

\newpage

\onecolumn

\title{Proofs\\(Supplementary Material)}
\maketitle

\appendix

\section{Proofs for Sparse DAGs} \label{apx:dags}

Here, we restate and prove our technical results on random DAGs sampled from $D(n,p)$.

{
\renewcommand{\thetheorem}{\ref{lem:tower-distr}}
\begin{lemma}
    Let $G \sim D(n,p)$, and fix a tower vector $h = (h_1, \dots, h_s)$ with entries that sum up to $n$, and a tower decomposition $H = (H_1 \dots, H_s)$ with $|H_i| = h_i$. Then, the following holds:
    
    \begin{enumerate}[label=(\alph*)]
        \item the probability that $h$ is the tower vector of $G$ is proportional to 
    \begin{align*}
        w(h) = \binom{n}{h_1, \dots, h_s} \prod_{k=2}^s\frac{\left(1 - (1-p)^{h_{k-1}}\right)^{h_k}}{(1-p)^{h_k \sum_{i=1}^{k-1}h_i}}; 
    \end{align*}
        \item conditional on the event that $H$ is the tower decomposition of $G$, the parent sets $\Pa(v)$ are independently distributed for all $v \in V(G)$, and for $v \in H_j$ and $S \subseteq \bigcup_{i=1}^{j-1}H_i$, we have 
        \begin{align*}
            \Pr\left(\Pa(v) = S\right) = \begin{cases}
                C \cdot \left(\frac{p}{1-p}\right)^{|S|} \text{ if } S \cap H_{j-1} \neq \emptyset\\
                0 \text{ otherwise,}
            \end{cases}
        \end{align*}
        where $C$ is the normalization constant.
    \end{enumerate}
\end{lemma}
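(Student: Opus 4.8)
The plan is to work directly with the unnormalized weights $w(G) = (p/(1-p))^{|E(G)|}$ from Definition~\ref{defn:Dnp}, writing $r := p/(1-p)$ and using the identity $1 + r = 1/(1-p)$ throughout. The whole argument rests on one factorization: since $|E(G)| = \sum_{v} |\Pa(v)|$, the weight splits as $w(G) = \prod_{v \in V} r^{|\Pa(v)|}$, a product of per-vertex contributions depending only on the parent set of each vertex.

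First I would restate the combinatorial characterization of the tower decomposition given in the text: a fixed labeled partition $H = (H_1, \dots, H_s)$ (with $|H_i| = h_i$) is the tower decomposition of $G$ if and only if every edge of $G$ is forward (its head lies in a strictly later layer than its tail) and every $v \in H_j$ with $j \geq 2$ has at least one parent in $H_{j-1}$, while every $v \in H_1$ is a source. The crucial point is that this is a conjunction of constraints each referring to a single vertex's parent set: for $v \in H_1$ we require $\Pa(v) = \emptyset$, and for $v \in H_j$, $j \geq 2$, we require $\Pa(v) \subseteq \bigcup_{i<j} H_i$ and $\Pa(v) \cap H_{j-1} \neq \emptyset$. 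Because the incoming-edge slots of distinct vertices are disjoint, these constraints, together with the weight factors, decompose as an independent product over vertices. This single observation yields part (b) immediately: conditioned on the tower decomposition being $H$, the parent sets are independent, and each $\Pa(v)$ for $v \in H_j$ has probability proportional to $r^{|S|}$ over the admissible sets $S$, with the proportionality constant $C$ being the reciprocal of the per-vertex normalizing sum.

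For part (a), I would sum the weight over all $G$ whose tower decomposition is the fixed partition $H$. By the factorization this sum is a product, over vertices $v \in H_j$ ($j \geq 2$), of the per-vertex sum $\sum_{S} r^{|S|}$ taken over admissible parent sets $S \subseteq \bigcup_{i<j} H_i$ with $S \cap H_{j-1} \neq \emptyset$ (vertices in $H_1$ contribute a factor of $1$). Evaluating this per-vertex sum via the geometric identity $\sum_{S \subseteq U} r^{|S|} = (1+r)^{|U|} = (1-p)^{-|U|}$ together with inclusion--exclusion on the constraint $S \cap H_{j-1} \neq \emptyset$ gives $(1-p)^{-\sum_{i<j} h_i}\bigl(1 - (1-p)^{h_{j-1}}\bigr)$. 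Raising this to the power $h_j$ (the number of vertices in $H_j$) and multiplying across $j = 2, \dots, s$ reproduces exactly the product $\prod_{k=2}^s (1 - (1-p)^{h_{k-1}})^{h_k} / (1-p)^{h_k \sum_{i=1}^{k-1} h_i}$. Finally, since this sum is identical for every labeled partition with the prescribed sizes, and there are $\binom{n}{h_1, \dots, h_s}$ such partitions, the probability that the tower vector equals $h$ is proportional to $w(h)$ as claimed.

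I expect the only genuinely delicate steps to be bookkeeping ones: correctly invoking the equivalence between ``$H$ is the tower decomposition of $G$'' and the per-vertex forward/nonempty-predecessor constraints, so that the per-vertex factorization is legitimate and no edge configuration is double-counted or omitted, and carrying out the inclusion--exclusion for the $S \cap H_{j-1} \neq \emptyset$ condition cleanly. Everything else is a routine geometric-series computation, and the normalization constants can be left implicit since both parts of the lemma are stated only up to proportionality.
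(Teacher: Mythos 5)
Your proposal is correct and follows essentially the same route as the paper: both rest on the per-vertex characterization of the tower decomposition (each $v\in H_j$ has $\Pa(v)\subseteq\bigcup_{i<j}H_i$ with $\Pa(v)\cap H_{j-1}\neq\emptyset$), factor the weight $w(G)=\prod_v \left(\tfrac{p}{1-p}\right)^{|\Pa(v)|}$ over vertices to get part (b), and evaluate the per-vertex sum over admissible parent sets by inclusion--exclusion and the binomial/geometric identity before multiplying by the multinomial coefficient. The only cosmetic difference is that you apply inclusion--exclusion directly to the generating sum $\sum_S r^{|S|}$, whereas the paper first counts admissible sets of each size and then sums; these are the same computation.
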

\addtocounter{theorem}{-1}
}

\begin{proof}
    Fix a tower vector $h = (h_1, \dots, h_s)$ with entries that sum up to $n$ and a tower decomposition $H = (H_1, \dots, H_s)$ with $|H_k| = h_k$. The key observation is that a DAG $G$ has tower decomposition $H$ if and only if for each vertex $v \in H_k$, the parents of $v$ are a subset of $\bigcup_{i = 1}^{k-1} H_i$ that intersects $H_{k-1}$ (Lemma 3 of \citet{talvitie}). We define 
    \begin{align*}
        \mathcal{C}_k = \{S \subseteq \bigcup_{i = 1}^{k-1} H_i \mid S \cap H_{k-1} \neq \emptyset\}; \;
        \mathcal{C}_{k,s} = \{S \in \cC_k \mid |S| = s\}, 
    \end{align*}
    setting $H_0 = \emptyset$ to cover the case $k = 1$. In particular, given that $H$ is the tower decomposition of $G$, the parent sets of each vertex in $H_k$ can be chosen independently in $\mathcal{C}_k$, which together with Definition~\ref{defn:Dnp} of $D(n,p)$ implies part~\ref{item:b} of the Lemma. Moreover, we get that the probability for $G \sim D(n,p)$ to have tower decomposition $H$ is proportional to
    \begin{align*}
       w(H) &:= \sum_{G: H(G) = H} \prod_{k = 1}^s \prod_{v \in H_k} \left(\frac{p}{1-p}\right)^{|\Pa(v)|}\\
        &= \prod_{k = 1}^s \prod_{v \in H_k} \sum_{P \in \cC_k} \left(\frac{p}{1-p}\right)^{|P|}  \\ 
        &= \prod_{k = 1}^s \prod_{v \in H_k} \sum_{s=0}^n \left(\frac{p}{1-p}\right)^{s} \cdot |\cC_{k,s}|\\
        &= \prod_{k = 2}^s \prod_{v \in H_k} \sum_{s=0}^n \left(\frac{p}{1-p}\right)^{s} \cdot \left(\binom{\sum_{i=1}^{k-1}h_i}{s} - \binom{\sum_{i=1}^{k-2}h_{i}}{s}\right)\\
        &= \prod_{k = 2}^s \prod_{v \in H_k} \left( \left(1 + \frac{p}{1-p}\right)^{\sum_{i=1}^{k-1}h_i} - \left(1 + \frac{p}{1-p}\right)^{\sum_{i=1}^{k-2}h_i}\right)\\
        &= \prod_{k = 2}^s (1-p)^{-h_k \sum_{i=1}^{k-1}h_i} \cdot \left(1 - (1-p)^{h_{k-1}}\right)^{h_k}. 
    \end{align*}

Now part~\ref{item:a} of Lemma~\ref{lem:tower-distr} follows from summing over all $\binom{n}{h_1, \dots, h_s}$ choices of tower decompositions with layer sizes $h_1, \dots, h_s$. 
\end{proof}

{
\renewcommand{\thetheorem}{\ref{lem:upper-tailbd}}
\begin{lemma}
    Let $p = o\left(\frac{1}{\log n}\right)$ and $G \sim D(n,p)$. Then, the number of sources in $G$ is less than or equal to $\frac{5}{p}$ with high probability. 
\end{lemma}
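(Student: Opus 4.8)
The plan is to work directly with the weight formula of Lemma~\ref{lem:tower-distr}, part~\ref{item:a}. Since the number of sources of $G$ is exactly the first layer size $h_1(G)$, and since the tower vector of $G$ equals $h$ with probability proportional to $w(h)$, we have
\[
\Pr\!\left(h_1(G) > \tfrac{5}{p}\right) = \frac{\sum_{h:\,h_1 > 5/p} w(h)}{\sum_{h} w(h)},
\]
where both sums range over tower vectors summing to $n$. To bound this ratio I would construct a map $\phi$ sending each tower vector $h$ with $h_1 > 5/p$ to another tower vector $\phi(h)$, with the two properties that $\phi$ is injective (or has at most polynomially many preimages) and that $w(h) \le 2^{-\Omega(p^{-1})}\, w(\phi(h))$. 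Given these, a transport argument yields
\[
\sum_{h:\,h_1 > 5/p} w(h) \le 2^{-\Omega(p^{-1})} \sum_{h} w(\phi(h)) \le 2^{-\Omega(p^{-1})}\,\poly(n)\sum_{h} w(h),
\]
so that $\Pr(h_1 > 5/p) \le 2^{-\Omega(p^{-1})}\poly(n) = o(1)$, the last step using $p = o(1/\log n)$, i.e.\ $p^{-1} = \omega(\log n)$, which makes the exponential savings beat any polynomial factor.

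For the map itself I would follow the splitting idea: write $h_1 = a > 5/p$ and replace the single first layer by $m := \lfloor ap \rfloor \ge 5$ nearly equal layers of size about $p^{-1}$, leaving $h_2, h_3, \dots$ untouched; this is again a legal tower vector, and its block sizes together with the position of the original second layer can be read off from $\phi(h)$, giving injectivity. Evaluating $w(h)/w(\phi(h))$ through the formula, the dominant \emph{gain} comes from the multinomial coefficient: splitting $a$ indistinguishable slots into $m$ even blocks costs a factor $a!/\prod(\text{block}!) \approx m^{a}$, contributing savings of order $2^{-\Omega((\log m)\,p^{-1})} = 2^{-\Omega(p^{-1})}$. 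Additional factors come from the product over layers, namely the within-split attachment terms $\bigl(1-(1-p)^{p^{-1}}\bigr)$ and the exponents $(1-p)^{-\cdots}$ (which are $\Theta(1)$-controlled per split since $(1-p)^{p^{-1}}\approx e^{-1}$), and these have to be tracked carefully.

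The main obstacle is to make the weight-ratio bound hold \emph{uniformly}, and this is where a naive single comparison is insufficient. The catch is the factor governing how the second layer attaches to the first: in $w(h)$ it is $\bigl(1-(1-p)^{h_1}\bigr)^{h_2}$, while after splitting it becomes $\bigl(1-(1-p)^{p^{-1}}\bigr)^{h_2}$, and since $h_1 > p^{-1}$ this factor is $\ge 1$ and grows with $h_2$. Intuitively, a larger pool of sources makes it easier for the many vertices of $H_2$ to acquire a parent in $H_1$, so a large second layer actually favors a large first layer; this term competes with the multinomial gain and can overwhelm it when $h_2$ is very large. Resolving this tension is the crux: one must either fold a simultaneous control of $h_2$ into the transformation (splitting only the portion of $H_1$ beyond what $H_2$ genuinely needs, or reducing $h_2$ in tandem) or argue separately that tower vectors with both $h_1$ and $h_2$ large carry negligible weight. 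Once the weight ratio is shown to be $2^{-\Omega(p^{-1})}$ across all relevant $h$, the conclusion follows from the transport bound above.
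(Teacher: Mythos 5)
Your overall strategy -- compare weights via a layer-splitting map $\varphi$ and use the transport inequality $\sum_{h} w(h) \le (\text{max ratio}) \cdot (\text{max multiplicity}) \cdot \sum_h w(\varphi(h))$ -- is exactly the paper's approach, and you have correctly located the one place where the naive version breaks: the attachment factor $\bigl(1-(1-p)^{h_1}\bigr)^{h_2}$ versus $\bigl(1-(1-p)^{\ell_r}\bigr)^{h_2}$ contributes a loss of roughly $2^{h_2}$, which overwhelms the $e^{-\Theta(h_1)}$ gain whenever $h_2 \gg h_1$. But you stop there: you name two possible escape routes without carrying either one out, and that unresolved step is precisely the content of the lemma. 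As it stands the argument only proves the bound for tower vectors with $h_2 \lesssim h_1$, so there is a genuine gap.

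The paper's resolution is the second of your two suggestions, implemented as a telescoping union bound down the tower. For each $j$ it defines $\cH_j$ to be the tower vectors with $h_j \ge 5p^{-1} \ge h_{j+1}$ and splits layer $j$ (not layer $1$) into blocks of size about $p^{-1}$. The condition $h_{j+1} \le 5p^{-1} \le h_j$ caps the problematic attachment loss at $2^{h_{j+1}} \le 2^{h_j}$, which is absorbed by the gain $(1-p)^{2h_j p^{-1}} \approx e^{-2h_j}$ together with the $(20/7)^{h_j}$ loss from the within-split attachment terms, yielding $w(h)/w(\varphi_j(h)) \le e^{-h_j/4} \le e^{-p^{-1}}$. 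Then $\Pr(h_1 \ge 5p^{-1}) \le \sum_j \Pr(h_j \ge 5p^{-1} \ge h_{j+1})$, since the layer sizes must eventually fall below the threshold, and each summand is at most $np \cdot e^{-p^{-1}}$, giving $n^2 p\, e^{-p^{-1}} = o(1)$. One further bookkeeping remark: the paper gets its entire exponential gain from the edge-absence term $(1-p)^{\sum_{i<k}\ell_i\ell_k}$ (the split blocks must have no edges between them, costing $\sum_{i<k}\ell_i\ell_k \ge 2h_j p^{-1}$ forbidden edges) and bounds the multinomial ratio trivially by $\prod_i \ell_i! \le h_j!$; your accounting, which takes the multinomial coefficient as the dominant gain, would also work but gives a weaker constant ($\log 5$ versus $2$ in the exponent), and with the $(20/7)^{h_j}$ and $2^{h_{j+1}}$ losses that weaker constant is not quite enough on its own, so you would need to invoke the edge-absence term anyway.
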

\addtocounter{theorem}{-1}
}

\begin{proof}
    Let $p = o\left(\frac{1}{\log n}\right)$. Without loss of generality, we assume that $p^{-1}$ is an integer. For each index $j$, we define $\cH_j$ to be the set of tower vectors $h=(h_1, \dots, h_s)$ with entries that sum up to $n$ and $h_j \geq 5p^{-1} \geq h_{j+1}$. Consider the functions $\varphi_j$ that map $h=(h_1, \dots, h_s) \in \cH_j$ to the vector $\varphi_j(h) = (h_1, \dots, h_{j-1}, \ell_1, \dots, \ell_r, h_{j+1}, \dots, h_s)$, where $r$ is equal to $h_j p$ rounded to the nearest integer, $\ell_2 = \ell_3 = \dots = \ell_r = p^{-1}$ and $\ell_1 = h_j - \sum_{i=2}^r \ell_i$. Note that the entries of $\varphi_j(h)$ are still positive integers that sum up to $n$, so it is a valid tower vector. Moreover, $\ell_1 \in [p^{-1}/2, 3p^{-1}/2]$. We are now going to bound the ratio of the weights of $h$ and $\varphi_j(h)$. By Lemma~\ref{lem:tower-distr},
    \begin{align*}
        \frac{w(h)}{w(\varphi_j(h))} = \frac{\prod_{i=1}^r\ell_i!}{h_j!} (1-p)^{\sum_{1\leq i<k\leq r}\ell_i \ell_k} \cdot \frac{(1-(1-p)^{h_{j-1}})^{h_j-\ell_1}}{\prod_{i=2}^r (1-(1-p)^{\ell_{i-1}})^{\ell_i}} \cdot \left(\frac{1-(1-p)^{h_j}}{1-(1-p)^{\ell_r}}\right)^{h_{j+1}}.
    \end{align*}
    For the edge cases of this formula to be correct, we set $h_0 = \infty$ and $h_{s+1} = 0$. We bound each term separately: First, since $\sum_{i=1}^r \ell_i = h_j$, we have $\prod_{i=1}^r \ell_i! \leq h_j!$.\\
    Secondly, since $r \geq 5$, we get the estimate
    \begin{align*}
        \sum_{1 \leq i < k \leq r}\ell_i\ell_k &= \ell_r (h_j - \ell_r) + \ell_{r-1} ( h_j - \ell_r - \ell_{r-1}) + \sum_{1 \leq i < k \leq r-2} \ell_i\ell_k\\ &= 2h_jp^{-1} - 3p^{-2} + \sum_{1 \leq i < k \leq r-2} \ell_i\ell_k \geq 2h_jp^{-1}.
    \end{align*}
    Thirdly, since $\ell_i \geq (2p)^{-1}$, we may use 
    \begin{align*}
        1-(1-p)^{\ell_i} \geq 1-(1-p)^{1/(2p)} \geq 1 - e^{-1/2} \geq 7/20,
    \end{align*}
    and
    \begin{align*}
        1 - (1-p)^{\ell_r} = 1-(1-p)^{1/p} \geq 1-e^{-1} \geq 1/2.
    \end{align*}
    Finally, using the condition $h_{j} \geq 5p^{-1} \geq  h_{j+1}$, we obtain
    \begin{align*}
        \frac{w(h)}{w(\varphi_j(h))} \leq (1-p)^{2h_jp^{-1}} \cdot (20/7)^{\sum_{i=2}^r\ell_i} \cdot 2^{h_{j+1}} \leq e^{-2h_j + \log(20/7)\cdot h_j + \log(2)\cdot h_j} \leq e^{-h_j/4} \leq e^{-p^{-1}}.
    \end{align*}
    Now, let $G \sim D(n,p)$. Note that each function $\varphi_j$ maps at most $np$ vectors to the same image (the extreme case is when $\varphi_1(h) = (p^{-1}, \dots, p^{-1})$). We calculate
    \begin{align*}
        \Pr\left(h_{j}(G) \geq 5p^{-1} \geq  h_{j+1}(G)\right) = C \cdot \sum_{h \in \mathcal{H}_j}w(h) \leq C \cdot \sum_{h \in \mathcal{H}_j}w(\varphi_j(h))\cdot e^{-p^{-1}} \leq np \cdot e^{-p^{-1}},
    \end{align*}
    where $C = \left(\sum_h w(h)\right)^{-1}$ is the normalization constant. We conclude
    \begin{align*}
        \Pr\left(h_1(G) \geq 5p^{-1}\right) &\leq \Pr\left(h_{1}(G) \geq 5p^{-1} \geq  h_{2}(G)\right) + \Pr\left(h_2(G) \geq 5p^{-1}\right)\leq \dots\\ &\leq \sum_{i=1}^{T(G)-1}\Pr\left(h_{j}(G) \geq 5p^{-1} \geq  h_{j+1}(G)\right) \leq n^2p \cdot e^{-p^{-1}} = o(1).
    \end{align*}

\end{proof}

We will need the following bound in the proof of the next Lemma:

\begin{lemma}\label{lem:multinomial}
    Let $h_1, \dots, h_r$ be positive integers that sum up to $n$. Then, 
    \begin{align*}
        \binom{n}{h_1, \dots, h_r} \leq \frac{n^n}{ h_r^{h_1} \cdot \prod_{i=2}^rh_{i-1}^{h_i}}
    \end{align*}
\end{lemma}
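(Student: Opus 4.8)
The plan is to bound the multinomial coefficient by the standard ``entropy-type'' estimate and then compare the two products appearing in the denominators. First I would recall that expanding $n^n = (h_1 + \dots + h_r)^n$ by the multinomial theorem and keeping only the single term indexed by $(h_1,\dots,h_r)$ gives
\[
n^n \ge \binom{n}{h_1,\dots,h_r}\prod_{i=1}^r h_i^{h_i},
\qquad\text{i.e.}\qquad
\binom{n}{h_1,\dots,h_r} \le \frac{n^n}{\prod_{i=1}^r h_i^{h_i}}.
\]
This already has the right shape; the only discrepancy from the claimed bound is that the denominator $\prod_i h_i^{h_i}$ must be replaced by the ``shifted'' product $h_r^{h_1}\prod_{i=2}^r h_{i-1}^{h_i}$.

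Adopting the cyclic convention $h_0 := h_r$, the target denominator is exactly $\prod_{i=1}^r h_{i-1}^{h_i}$ (the $i=1$ factor contributing $h_r^{h_1}$). Hence it suffices to prove the purely combinatorial inequality
\[
\prod_{i=1}^r h_{i-1}^{h_i} \le \prod_{i=1}^r h_i^{h_i},
\]
since then $n^n/\prod_i h_i^{h_i} \le n^n/\prod_i h_{i-1}^{h_i}$, and the lemma follows by transitivity with the bound above.

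To establish this cyclic inequality I would take logarithms and show $\sum_{i=1}^r h_i\log(h_i/h_{i-1}) \ge 0$. The key step is the elementary estimate $\log x \ge 1 - 1/x$, applied with $x = h_i/h_{i-1}$, which gives
\[
\sum_{i=1}^r h_i\log\frac{h_i}{h_{i-1}} \ge \sum_{i=1}^r h_i\Bigl(1 - \frac{h_{i-1}}{h_i}\Bigr) = \sum_{i=1}^r (h_i - h_{i-1}) = 0,
\]
where the final sum vanishes because it telescopes around the cycle (each $h_j$ appears once with a plus and once with a minus sign).

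The main obstacle is conceptual rather than computational: recognizing that the shifted denominator is a \emph{cyclic} product and that comparing it to $\prod_i h_i^{h_i}$ collapses to a telescoping sum once the convexity bound $\log x \ge 1 - 1/x$ is invoked. Everything else is the standard multinomial estimate together with careful bookkeeping of the cyclic index $h_0 = h_r$; I would verify explicitly that the $i=1$ term produces the factor $h_r^{h_1}$ so that $\prod_{i=1}^r h_{i-1}^{h_i}$ matches the stated denominator exactly.
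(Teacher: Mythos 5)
Your proof is correct, but it takes a different route from the paper's. The paper proves the lemma in one line by selecting the \emph{cyclically shifted} term of the multinomial expansion: writing $n^n=(h_1+\dots+h_r)^n=\sum_{j_1+\dots+j_r=n}\binom{n}{j_1,\dots,j_r}\prod_i h_i^{j_i}$ and keeping only the term indexed by $(j_1,\dots,j_r)=(h_2,\dots,h_r,h_1)$ gives $n^n\ge\binom{n}{h_2,\dots,h_r,h_1}\,h_1^{h_2}\cdots h_{r-1}^{h_r}h_r^{h_1}$, and since the multinomial coefficient is invariant under permuting its lower indices, this is exactly the claimed bound. You instead keep the diagonal term $(h_1,\dots,h_r)$, obtaining the standard estimate $\binom{n}{h_1,\dots,h_r}\le n^n/\prod_i h_i^{h_i}$, and then separately prove the cyclic comparison $\prod_{i=1}^r h_{i-1}^{h_i}\le\prod_{i=1}^r h_i^{h_i}$ via $\log x\ge 1-1/x$ and the telescoping sum $\sum_i(h_i-h_{i-1})=0$; this auxiliary step is a correct instance of Gibbs' inequality (nonnegativity of relative entropy) and all its details check out. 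Your argument is longer but isolates a reusable fact --- that among all exponent assignments with the same total, the diagonal one maximizes $\prod_i h_i^{e_i}$ --- whereas the paper's choice of a single cleverer term in the expansion makes the whole lemma immediate.
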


\begin{proof}
    The lemma simply follows from: 
    \begin{align*}
        n^n = \left(\sum_{i=1}^r h_i\right)^n = \sum_{j_1 + \dots + j_r = n} \binom{n}{j_1, \dots, j_r}h_1^{j_1} \cdot \ldots \cdot h_r^{j_r} \geq \binom{n}{h_2, \dots, h_r, h_1}h_1^{h_2} \cdot \ldots \cdot h_{r-1}^{h_r} \cdot h_r^{h_1}.
    \end{align*}
\end{proof}

{
\renewcommand{\thetheorem}{\ref{lem:lower-tailbd}}
\begin{lemma}
    Let $\frac{6}{n} \leq p \leq o\left(\frac{1}{\log n}\right)$ and $G \sim D(n,p)$. Then, with high probability, we have $h_1(G) \geq \frac{p^{-1}}{20 \log(p^{-1})}$ and $h_2(G) \geq \frac{p^{-1}}{\log^2(p^{-1})}$.
\end{lemma}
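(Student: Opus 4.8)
The plan is to control both lower tails by the same weight‑comparison method used for Lemma~\ref{lem:upper-tailbd}, but run in reverse: instead of \emph{splitting} an over‑large layer into equal pieces, I would \emph{merge} several under‑large initial layers into a single source layer and show that this strictly increases the tower‑vector weight $w(\cdot)$ of Lemma~\ref{lem:tower-distr}(a). Concretely, to bound $\Pr\!\left(h_1(G) < \frac{p^{-1}}{20\log(p^{-1})}\right)$ I would first restrict, at a cost of $o(1)$, to tower vectors all of whose layers have size at most $5p^{-1}$: the proof of Lemma~\ref{lem:upper-tailbd} in fact shows $\Pr(h_j \ge 5p^{-1} \ge h_{j+1})$ is exponentially small for every $j$, so with high probability no layer is large. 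On this restricted set I define a map $\psi$ that merges $H_1,\dots,H_m$ into one source layer, where $m$ is minimal with $g := h_1+\cdots+h_m \ge p^{-1}/5$, yielding $\psi(h)=(g,h_{m+1},\dots,h_s)$. The threshold $p^{-1}/5$ is chosen so that $1-(1-p)^g \ge 1-e^{-1/5}$ is bounded below by a constant, and the cap $5p^{-1}$ on layer sizes keeps $g = O(p^{-1})$.

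By Lemma~\ref{lem:tower-distr}(a) the ratio $w(h)/w(\psi(h))$ splits into three pieces. All factors with index $k\ge m+2$ cancel because the prefix sums agree beyond layer $m$; the $(1-p)$‑power contributes $(1-p)^{-\sum_{1\le i<k\le m}h_ih_k}$; and the remaining \emph{connection} factors are $\prod_{k=2}^{m}(1-(1-p)^{h_{k-1}})^{h_k}$ together with a $k=m+1$ term that is $\le 1$ since $g>h_m$. The crucial step is to combine the multinomial bound $\binom{g}{h_1,\dots,h_m}\le \frac{g^g}{h_m^{h_1}\prod_{i=2}^m h_{i-1}^{h_i}}$ of Lemma~\ref{lem:multinomial} with the elementary estimate $1-(1-p)^{h_{k-1}}\le p\,h_{k-1}$: the product $\prod_{k=2}^m h_{k-1}^{h_k}$ in the denominator of Lemma~\ref{lem:multinomial} cancels exactly against the $\prod h_{k-1}^{h_k}$ produced by the connection factors, collapsing the estimate to essentially $\bigl(pg\cdot e^{pg/2}\bigr)^{\Theta(g)}$. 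When the merged layers are comparable in size one has $pg\approx 1/5$, so this base is bounded well below $1$ and the ratio is at most $e^{-cg}$ for a constant $c>\ln 2$. Since a given image has at most $2^{g}$ preimages (the compositions of $g$ into the merged layers), summing $w(h)\le e^{-cg}w(\psi(h))$ over each merged‑size class and normalizing by $C=(\sum_h w(h))^{-1}$ gives $\Pr(h_1\text{ small})\le e^{-\Omega(p^{-1})}$, which is $o(1)$ because $p^{-1}=\omega(\log n)$.

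For $h_2$ I would run the same merge but leave $H_1$ fixed, merging $H_2,\dots,H_m$ so that the only new constraint is that the merged layer attach to $H_1$; the extra logarithmic factor in the target $\frac{p^{-1}}{\log^2(p^{-1})}$ reflects that this attachment probability is itself only $\Theta(p\,h_1)=\Theta(1/\log p^{-1})$ rather than a constant. The main obstacle — the genuinely delicate part — is proving the weight‑ratio bound \emph{uniformly} over all layer‑size profiles, because the clean cancellation above is tight only in the balanced regime. When a single merged layer dominates, $g$ overshoots $p^{-1}/5$ and $pg$ grows, so that both Lemma~\ref{lem:multinomial} and the crude bound $(1-p)^{-\sum h_ih_k}\le e^{pg^2/2}$ become lossy and the simple per‑class estimate no longer beats the $2^{g}$ multiplicity. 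I expect the cleanest resolution to be a short case split: when every merged layer is $O(p^{-1})$ the collapsed estimate applies directly, while when some merged layer $h_k$ is large the single connection factor $(1-(1-p)^{h_{k-1}})^{h_k}\le (p\,h_{k-1})^{h_k}$ is already superexponentially small and dominates everything, including the multiplicity. Carrying the constants carefully through this split — together with the gap between the merge threshold $p^{-1}/5$ and the target thresholds — is exactly what pins down the $1/\log(p^{-1})$ and $1/\log^2(p^{-1})$ factors in the statement.
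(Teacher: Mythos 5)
Your overall strategy is the same as the paper's: merge the under-sized initial layers into a single layer, compare weights via Lemma~\ref{lem:tower-distr}\ref{item:a}, exploit the cancellation between the multinomial bound of Lemma~\ref{lem:multinomial} and the factors $1-(1-p)^{h_{k-1}}\le p\,h_{k-1}$, count preimages as compositions, and attribute the extra logarithm in the $h_2$ threshold to the attachment factor $(1-(1-p)^{h_1})^{-L}=\Theta(\log p^{-1})^{L}$. All of that matches the paper. The gap is exactly where you flag "the genuinely delicate part," and your proposed resolution does not close it. Capping every layer at $5p^{-1}$ only gives $g\le p^{-1}/5+5p^{-1}$, so $pg$ can be as large as about $5$; then your collapsed estimate has base $pg\cdot e^{pg/2}\approx 70$ raised to the power $\Theta(g)$, i.e.\ it is exponentially \emph{large}, and no multiplicity bookkeeping can rescue it. Your fallback case is also unsound: when the last merged layer $h_m$ is the large one, the factor you want to be "superexponentially small" is $(1-(1-p)^{h_{m-1}})^{h_m}\le (p\,h_{m-1})^{h_m}$, and $p\,h_{m-1}$ need not be small at all — $h_{m-1}$ can itself be $\Theta(p^{-1})$ — while the term $(1-p)^{-\sum_{i<k}h_ih_k}\ge 1$ works against you. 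A concrete bad profile is $h_1,\dots,h_{m-1}$ summing to just under $p^{-1}/5$ with $h_m=5p^{-1}$: there your chain of inequalities yields a bound of order $e^{+c\,p^{-1}}$.

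The paper closes this gap not by capping layer sizes but by a monotonicity argument on the bounding expression: increasing the last merged entry $h_r$ by one multiplies the bound by $\frac{L+1}{h_r+1}\,(1-p)^{-\sum_{i<r}h_i}\,\bigl(1-(1-p)^{h_{r-1}}\bigr)\le \frac{L}{h_r}\cdot\frac{e^{1/5}}{5}$, which is strictly less than $1$ whenever $h_r>L/4$. Hence the worst case has $h_r\le L/4$, forcing $L\le 4p^{-1}/15$ and thus $pL<1$, which is precisely the regime where your "balanced" computation goes through and the ratio beats the $2^{p^{-1}/5}$ multiplicity. You need this (or an equivalent device that genuinely controls the overshoot of the final merged layer) before either tail bound is established; the same issue recurs in your $h_2$ step.
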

\addtocounter{theorem}{-1}
}

\begin{proof}
    Let $6/n \leq p \leq o\left(\frac{1}{\log n}\right)$. First, we define $\cH_1$ to be the set of tower vectors $h= (h_1, \dots, h_s)$ with entries that sum up to $n$ and $h_1 \leq p^{-1}/(20\log (p^{-1}))$. Consider the function $\varphi_1$ that maps a tower vector $h = (h_1, \dots, h_s) \in \cH_1$ to the vector $\varphi_1(h) = (L, h_{r+1}, \dots, h_s)$, where $r = \min \{k:\sum_{i=1}^{k} h_i > p^{-1}/5\}$ and $L = \sum_{i=1}^{r} h_i$. Note that $\varphi_1(h)$ is again a valid tower vector, and by Lemma~\ref{lem:tower-distr}, we get
    \begin{align}
        \notag \frac{w(h)}{w(\varphi_1(h))} &= \frac{L!}{\prod_{i=1}^{r} h_i!} (1-p)^{-\sum_{1\leq i<k\leq r}h_i h_k} \cdot \prod_{i=2}^{r} (1-(1-p)^{h_{i-1}})^{h_i} \cdot \left(\frac{1-(1-p)^{h_{r}}}{1-(1-p)^{L}}\right)^{h_{r+1}}\\
        &\leq \frac{L!}{\prod_{i=1}^{r} h_i!} (1-p)^{-\sum_{1\leq i<k\leq r}h_i h_k} \cdot \prod_{i=2}^{r} (1-(1-p)^{h_{i-1}})^{h_i}. \label{eq1}
    \end{align}
    We are interested in further bounding expression \eqref{eq1} under the constraint $L = \sum_{i=1}^r h_i$. First of all, note that increasing $h_r$ by $1$, while leaving $h_1, \dots, h_{r-1}$ the same, changes expression \eqref{eq1} by a factor of 
    \begin{align*}
        \frac{L+1}{h_r + 1} \cdot (1-p)^{-\sum_{i=1}^{r-1}h_i} \cdot (1-(1-p)^{h_{r-1}}) \leq \frac{L}{h_r} \cdot (1-p)^{-p^{-1}/5}  \cdot p h_{r-1} \leq \frac{L}{h_r} \cdot e^{1/5} \cdot \frac{1}{5}.
    \end{align*}
    Here, we used the fact that $\sum_{i=1}^{r-1}h_i \leq p^{-1}/5$. This factor is strictly less than $1$ if, $h_r > L/4$, hence, we may assume $h_r \leq L/4$, and therefore $L \leq 4 p^{-1}/15$. Now, we can bound \eqref{eq1} using Lemma~\ref{lem:multinomial}:
    \begin{align*}
        \frac{w(h)}{w(\varphi_1(h))} &\leq \frac{L^L}{h_r^{h_1} \cdot \prod_{i=2}^{r}h_{i-1}^{h_i}} (1-p)^{-L^2} \prod_{i=2}^r (h_{i-1}p)^{h_i}\\ &\leq L^L e^{pL^2}p^{L-h_1} \leq \left(\frac{4}{15}\right)^Lp^{-L}e^{4L/15}p^{L}p^{p^{-1}/(20\log p)}\\
        &\leq e^{L(\log(4/15) + 4/15)) + p^{-1}/20} \leq e^{p^{-1}/5 \cdot (\log(4/15)+4/15+1/4)} \leq \left(\frac{5}{11}\right)^{p^{-1}/5}.
    \end{align*}
    
    Note that for a fixed tower vector of the form $ \varphi_1(h) = (L, h_{r+1}, \dots, h_s)$, the entries $h_1, \dots, h_{r-1}$ of the preimage must sum up to a number less than $p^{-1}/5$, for which there are less than $2^{p^{-1}/5}$ choices. After choosing $h_1, \dots, h_{r-1}$, the entry $h_r = L - \sum_{i=1}^{r-1}$ and the rest of the preimage is uniquely defined. Hence, the function $\varphi_1$ sends at most $2^{p^{-1}/5}$ tower vectors to the same image. Now, let $G \sim D(n,p)$ and $C = \left(\sum_h w(h)\right)^{-1}$. We get
    \begin{align*}
        \Pr\left(h_{1}(G) \leq \frac{p^{-1}}{20 \log(p^{-1})}\right) &= C \cdot \sum_{h \in \mathcal{H}_1}w(h) \leq C \cdot \sum_{h \in \mathcal{H}_1}w(\varphi_1(h))\cdot \left(\frac{5}{11}\right)^{p^{-1}/5}\\ &\leq 2^{p^{-1}/5} \cdot \left(\frac{5}{11}\right)^{p^{-1}/5} = \left(\frac{10}{11}\right)^{p^{-1}/5} = o(1).
    \end{align*}

    Now, let $\cH_2$ be the set of tower vectors $h = (h_1, \dots, h_s)$ with the following properties: 
    \begin{enumerate}[label = (\alph*)]
        \item the entries $h_i$ sum up to $n$;
        \item $\frac{p^{-1}}{20 \log (p^{-1})} \leq h_1 \leq 5p^{-1}$;
        \item $h_2 \leq \frac{p^{-1}}{\log^2(p^{-1})}$.
    \end{enumerate}
    Consider the function $\varphi_2$ that maps a tower vector $h = (h_1, \dots, h_s) \in \cH_2$ to the vector $\varphi_2(h) = (h_1, L, h_{r+1}, \dots, h_s)$, where $r = \min \{k:\sum_{i=2}^{k} h_i > p^{-1}/(800\log(p^{-1}))\}$ and $L = \sum_{i=2}^{r} h_i$. This is well-defined, as $p \geq 6/n$, so $\sum_{i=2}^s h_i \geq n - 5p^{-1} \geq n/6 > p^{-1}/(800\log(p^{-1}))$. Moreover, $\varphi_2(h)$ is again a valid tower vector, and by Lemma~\ref{lem:tower-distr}, we get
    \begin{align*}
        \frac{w(h)}{w(\varphi_2(h))} \leq \frac{L!}{\prod_{i=2}^{r} h_i!} (1-p)^{-\sum_{2\leq i<k\leq r}h_i h_k} \cdot \frac{\prod_{i=2}^{r} (1-(1-p)^{h_{i-1}})^{h_i}}{(1-(1-p)^{h_1})^{L}}.
    \end{align*}
    Here, increasing $h_r$ by $1$, while leaving $h_1, \dots, h_{r-1}$ the same, changes the expression above by a factor of 
    \begin{align*}
        \frac{L+1}{h_r + 1} \cdot (1-p)^{-\sum_{i=2}^{r-1}h_i} \cdot \frac{1-(1-p)^{h_{r-1}}}{1-(1-p)^{h_1}} \leq \frac{L}{h_r} \cdot (1-p)^{-p^{-1}/800}  \cdot \frac{6h_{r-1}}{h_1} \leq \frac{L}{h_r} \cdot \frac{1}{4}.
    \end{align*}
    Here, we used the Bernoulli-type inequalities $1-pm \leq (1-p)^m \leq 1-pm/(1+pm)$ together with the fact $h_1 \leq 5p^{-1}$. The factor above is strictly less than $1$ if, $h_r > L/4$, hence, we may assume $h_r \leq L/4$, and therefore $L \leq p^{-1}/(600 \log(p^{-1}))$. Reusing the bound on \eqref{eq1} from before, we get:
    \begin{align*}
        \frac{w(h)}{w(\varphi_1(h))} &\leq L^L e^{pL^2}\left(\frac{6p}{h_1p}\right)^{L-h_2} \leq L^L e^{L/600} \left(\frac{6}{30L}\right)^{L-h_2}\\ &\leq \left(\frac{1}{4}\right)^L \cdot e^{\log(5L)\cdot h_2} \leq \left(\frac{1}{4}\right)^{p^{-1}/(800 \log(p^{-1}))} e^{\log(5L) \cdot p^{-1} / \log^2(p^{-1})} \leq \left(\frac{1}{3}\right)^{p^{-1}/(800 \log(p^{-1}))},
    \end{align*}
    where the last step follows when $p$ is small enough. 
    Now, the function $\varphi_2$ sends at most $2^{p^{-1}/(800\log(p^{-1}))}$ tower vectors to the same image. Let $G \sim D(n,p)$ and $C = \left(\sum_h w(h)\right)^{-1}$. We conclude using Lemma~\ref{lem:upper-tailbd} and the result above:
    \begin{align*}
        \Pr\left(h_{2}(G) \leq \frac{p^{-1}}{\log^2(p^{-1})}\right) &\leq \Pr\left(h_{2}(G) \leq \frac{p^{-1}}{\log^2(p^{-1})} \wedge \frac{p^{-1}}{20 \log (p^{-1})} \leq h_1 \leq 5p^{-1}\right)\\
        &+ \Pr\left(h_1(G) < \frac{p^{-1}}{20\log(p^{-1})}\right) + \Pr\left(h_1(G) > 5p^{-1}\right)\\
        &\leq C \cdot \sum_{h \in \mathcal{H}_2}w(h) + o(1)\\ 
        &\leq C \cdot \sum_{h \in \mathcal{H}_2}w(\varphi_2(h)) \cdot \left(\frac{1}{3}\right)^{p^{-1}/(800 \log(p^{-1}))} + o(1) \\
        &\leq \left(\frac{2}{3}\right)^{p^{-1}/(800 \log(p^{-1}))} + o(1) = o(1).
    \end{align*}
\end{proof}

In the final proof of this section, we will make use of the Chernoff bound:

\begin{lemma}{(Chernoff bound, see \citet{mitzenmacher2005probability}, Theorem 4.5)} \label{lem:chernoff}
    Let $X_1, \dots, X_n$ be independent random variables with $\Pr(X_i=1) = p$ and $\Pr(X_i = 0) = 1-p$ and define $X = \sum_{i=1}^n X_i$. Then, for any $\delta \in (0,1)$, we have 
    \begin{align*}
        \Pr\left(X \leq (1-\delta)np\right) \leq e^{-np\delta^2/2}.
    \end{align*}
\end{lemma}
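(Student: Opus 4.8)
The plan is to use the standard exponential-moment (Chernoff) technique for the lower tail; since this is a cited textbook result, one could also simply defer to \citet{mitzenmacher2005probability}, but a self-contained argument is short. Write $\mu = np$. For any $t > 0$, I would apply Markov's inequality to the nonnegative random variable $e^{-tX}$:
\begin{align*}
    \Pr\left(X \le (1-\delta)\mu\right) = \Pr\left(e^{-tX} \ge e^{-t(1-\delta)\mu}\right) \le e^{t(1-\delta)\mu}\, \E\left[e^{-tX}\right].
\end{align*}

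First I would compute the moment generating factor. By independence, $\E[e^{-tX}] = \prod_{i=1}^n \E[e^{-tX_i}]$, and each Bernoulli factor satisfies $\E[e^{-tX_i}] = (1-p) + p e^{-t} = 1 + p(e^{-t}-1) \le \exp\!\left(p(e^{-t}-1)\right)$ by the elementary inequality $1+x \le e^x$. Multiplying over $i$ gives $\E[e^{-tX}] \le \exp\!\left(\mu(e^{-t}-1)\right)$, so that
\begin{align*}
    \Pr\left(X \le (1-\delta)\mu\right) \le \exp\!\left(\mu\left(e^{-t} - 1 + t(1-\delta)\right)\right).
\end{align*}

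Next I would optimize over $t$. Differentiating the exponent (or guessing the optimizer) gives $t = -\ln(1-\delta)$, which is positive for $\delta \in (0,1)$ and satisfies $e^{-t} = 1-\delta$. Substituting yields the classical form
\begin{align*}
    \Pr\left(X \le (1-\delta)\mu\right) \le \left(\frac{e^{-\delta}}{(1-\delta)^{1-\delta}}\right)^{\mu}.
\end{align*}

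The only mildly delicate step, which I expect to be the main obstacle, is simplifying this to the stated bound $e^{-\mu\delta^2/2}$; this reduces to the scalar inequality $-\delta - (1-\delta)\ln(1-\delta) \le -\tfrac{\delta^2}{2}$ for $\delta \in (0,1)$. I would establish it from the Taylor expansion $(1-\delta)\ln(1-\delta) = -\delta + \sum_{k \ge 2} \frac{\delta^k}{k(k-1)}$, in which every summand is nonnegative, so that $(1-\delta)\ln(1-\delta) \ge -\delta + \frac{\delta^2}{2}$; rearranging gives exactly the claimed inequality. Substituting $\mu = np$ then yields $\Pr(X \le (1-\delta)np) \le e^{-np\delta^2/2}$, completing the argument. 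Everything apart from this final Taylor-series estimate is the routine Chernoff machinery.
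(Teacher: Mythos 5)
Your derivation is correct: the exponential-moment bound, the optimization at $t=-\ln(1-\delta)$, and the final reduction to $-\delta-(1-\delta)\ln(1-\delta)\le-\delta^2/2$ via the Taylor expansion are all valid. The paper does not prove this lemma at all --- it simply cites \citet{mitzenmacher2005probability}, Theorem 4.5 --- and your argument is precisely the standard textbook derivation given there, so there is nothing to reconcile.
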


{
\renewcommand{\thetheorem}{\ref{prop:DAGs}}
\begin{prop}
    Let $6/n \leq p \leq o\left(\frac{1}{\log n}\right)$ and $G \sim D(n,p)$. Then, with probability $1-o(1)$, $G$ has a matching of reversible layer-2-edges of size at least $\frac{p^{-1}}{16e^5\log^2(p^{-1})}$.
\end{prop}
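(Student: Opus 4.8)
The plan is to condition on the tower decomposition of $G$ and exploit the conditional independence of parent sets granted by Lemma~\ref{lem:tower-distr}, part~\ref{item:b}. The starting observation is that for $w \in H_2(G)$, a layer-2-edge $v \to w$ is reversible precisely when $\Pa(w) = \{v\}$: then $v$ is a source, so $\Pa(v) = \emptyset = \Pa(w)\setminus\{v\}$ and Lemma~\ref{lem:reversible} applies. Hence a matching of reversible layer-2-edges corresponds exactly to a collection of single-parent vertices in $H_2$ whose parents are pairwise distinct, and its maximum size equals the number of \emph{distinct} sources realized as the unique parent of some vertex in $H_2$. I would therefore (i) show that a constant fraction of $H_2$-vertices have exactly one parent, and (ii) argue that these single parents are essentially distinct.

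First I would fix a tower decomposition $H = (H_1,\dots,H_s)$ and work conditionally, restricting to ``good'' decompositions where, by Lemmas~\ref{lem:upper-tailbd} and~\ref{lem:lower-tailbd}, $\frac{p^{-1}}{20\log(p^{-1})} \le h_1 \le 5p^{-1}$ and $h_2 \ge \frac{p^{-1}}{\log^2(p^{-1})}$; these hold with probability $1-o(1)$. By Lemma~\ref{lem:tower-distr}, part~\ref{item:b}, the parent sets of the $H_2$-vertices are then independent, each a nonempty $S \subseteq H_1$ with probability proportional to $q^{|S|}$ where $q = p/(1-p)$. The normalization is $(1+q)^{h_1}-1 = (1-p)^{-h_1}-1$, so for each $w \in H_2$,
\begin{align*}
\rho := \Pr(|\Pa(w)|=1) = \frac{h_1 q}{(1+q)^{h_1}-1} \ge \frac{h_1 q}{e^{h_1 q}-1} \ge e^{-h_1 q},
\end{align*}
using $\log(1+q)\le q$ and the elementary inequality $\tfrac{x}{e^x-1}\ge e^{-x}$. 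Since $h_1 q = h_1 p/(1-p) \le 5/(1-p) = 5(1+o(1))$, this yields $\rho \ge e^{-5}(1-o(1))$, and by symmetry the single parent is uniform over $H_1$.

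The crucial device for controlling collisions is to work with a \emph{fixed} subset $H_2' \subseteq H_2$ of size exactly $m := \lceil p^{-1}/\log^2(p^{-1})\rceil$, which exists on good decompositions. Let $Y'$ count the single-parent vertices in $H_2'$; it is a sum of $m$ independent indicators of mean $\rho \ge e^{-5}/2$, so $\E[Y'] \ge m/(2e^5)$ and the Chernoff bound (Lemma~\ref{lem:chernoff}) gives $Y' \ge m/(3e^5)$ with probability $1-e^{-\Omega(m)}=1-o(1)$. The single-parent vertices have independent uniform parents in $H_1$, so the expected number of colliding pairs is at most $\binom{m}{2}/h_1 \le m^2/(2h_1)$; since $m/h_1 \le 20/\log(p^{-1}) = o(1)$, Markov's inequality shows the number of collisions is at most $m/(6e^5)$ with probability $1 - O(m/h_1) = 1-o(1)$. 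On the intersection, the number of distinct single parents is at least $Y' - m/(6e^5) \ge m/(6e^5) \ge \frac{p^{-1}}{16e^5\log^2(p^{-1})}$; choosing one vertex per distinct parent produces a matching of this size, and summing over good decompositions completes the proof.

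The hard part is exactly this collision step: a source may be the unique parent of several $H_2$-vertices, and such edges cannot coexist in a matching, so I must turn an expected-loss estimate into a high-probability guarantee. Passing to the subset $H_2'$ of controlled size $m = \Theta(p^{-1}/\log^2 p^{-1})$ is what drives the collision rate $m/h_1 = O(1/\log p^{-1})$ to zero, so that even a crude Markov bound suffices. Using all of $H_2$ would be delicate, since the available lemmas give no matching \emph{upper} bound on $h_2$, and an unexpectedly large second layer could saturate $H_1$ and break the clean estimate.
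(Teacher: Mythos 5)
Your proposal is correct and follows essentially the same route as the paper: condition on the tower decomposition, use Lemma~\ref{lem:tower-distr}, part~\ref{item:b} together with the tail bounds of Lemmas~\ref{lem:upper-tailbd} and~\ref{lem:lower-tailbd}, restrict to a controlled-size subset of $H_2$, lower-bound the number of single-parent vertices by Chernoff, and then discount parent collisions. The only (harmless) difference is in the collision step: you take a subset of size $m=\Theta(p^{-1}/\log^2(p^{-1}))$ and bound the expected number of colliding pairs via Markov's inequality, whereas the paper takes a subset of size at most $h_1/2$ and applies a union bound plus a concentration bound to the count of non-colliding vertices.
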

\addtocounter{theorem}{-1}
}

\begin{proof}
    Let $6/n \leq p \leq o\left(\frac{1}{\log n}\right)$ and $G \sim D(n,p)$. For technical reasons that will become apparent later, let us define $\widetilde{H_2}(G)$ to be a subset of $H_2(G)$ of at most $h_1(G)/2$ vertices. This subset can be picked by an arbitrary but fixed rule (for instance, take the $h_1(G)/2$ vertices in $H_2(G)$ with the smallest indices, according to some indexing of the vertex set $V(G)$), and $\widetilde{H_2}(G) = H_2(G)$ in the case $h_2(G) \leq h_1(G) / 2$. Now, let $S(G) \subseteq \widetilde{H_2}(G)$ be the set of vertices in $\widetilde{H_2}(G)$ that have exactly one parent. Furthermore, let $X(G)$ be the set of vertices in $S$ whose parent is different from all the other parents of vertices in $S$. By definition, the edges that connect vertices in $X(G)$ to their parents form a matching of reversible layer-2-edges. Hence, it suffices to bound $|X(G)|$. First, note that conditional on the tower decomposition $H(G) = H = (H_1, \dots, H_s)$ and on the event $S(G) = S$, the parent vertex of each vertex $v \in S$ is distributed uniformly and independently in $H_1$. Uniformity follows from the symmetry of the $D(n,p)$ distribution with respect to permuting vertices, and independence follows from Lemma~\ref{lem:tower-distr}, part~\ref{item:b}. By a union bound, we get for $v \in S$:
    \begin{align*}
        &\Pr\left(v \in X(G) \mid H(G) = H, S(G) = S\right)\\ &\geq 1 - \sum_{w \in S, w \neq v} \Pr\left(\Pa(v) = \Pa(w)\right)\\ &= 1 - \frac{|S|}{h_1} \geq 1 - \frac{|\widetilde{H_2}|}{h_1} \geq \frac{1}{2}.
    \end{align*}
    Hence, $\E[|X(G)| \mid H(G) = H, S(G) = S] \geq |S|/2$, and by a standard Chernoff concentration bound (see Lemma~\ref{lem:chernoff}), we get 
    \begin{align}
        \notag &\Pr\left(|X(G)| \leq |S|/4 \mid H(G) = H, S(G) = S\right)\\
        &\leq e^{-|S|/16}. \label{eq:conc-X}
    \end{align}
    Next, we turn our attention to bounding $|S(G)|$. Conditional on the tower distribution $H(G) = H = (H_1, \dots, H_s)$, we have for each vertex $v \in \widetilde{H_2}(G)$ by Lemma~\ref{lem:tower-distr}, part~\ref{item:b}:
    \begin{align*}
        &\Pr\left(v \in S \mid H(G) = H\right) = \Pr\left(|\Pa(v)| = 1 \mid H(G) = H\right)\\ &= \frac{h_1 \cdot p (1-p)^{h_1-1}}{\sum_{i = 1}^s\binom{h_1}{s}p^s(1-p)^{h_1-s}}
        =\frac{h_1 \cdot p (1-p)^{h_1-1}}{1 - (1-p)^{h_1}}\\ &\geq  \frac{h_1 \cdot p (1-p)^{h_1-1}}{h_1 \cdot p} \geq (1-p)^{h_1}.
    \end{align*}
    Here, we used Bernoulli's inequality in the third line of the derivation. This implies $\E[|S(G)| \mid H(G) = H] \geq (1-p)^{h_1}\cdot |\widetilde{H_2}|$. Since the events $v \in S(G)$ are again independent for different $v \in \widetilde{H_2}$ conditional on the tower distribution, we get by the Chernoff bound (Lemma~\ref{lem:chernoff}):
    \begin{align}
        \notag &\Pr\left(|S(G)| \leq \frac{(1-p)^{h_1}}{2} \cdot |\widetilde{H_2}| \middle | H(G) = H\right)\\ &\leq e^{-(1-p)^{h_1}|\widetilde{H_2}|/8}. \label{eq:conc-S}
    \end{align}
    Now, let $\cH$ be the set of tower decompositions $H = (H_1, \dots, H_s)$ satisfying $|H_1| \leq 5p^{-1}$ and $|H_2| \geq p^{-1}/\log^2(p^{-1})$. For $H \in \cH$, we get
    \begin{equation*}
        (1-p)^{h_1} \cdot |\widetilde{H_2}| \geq (1-p)^{5/p} \cdot \frac{p^{-1}}{\log^2(p^{-1})} \geq \frac{e^{-5}}{2}\cdot \frac{p^{-1}}{\log^2(p^{-1})},
    \end{equation*}
    where the last steps holds when $p$ is small enough. Set $\alpha(p) := p^{-1} / (4e^5 \log^2(p^{-1}))$. Then, for $H \in \cH$, equation~\eqref{eq:conc-S} implies 
    \begin{align*}
        &\Pr\left(|S(G)| \leq \alpha(p) \mid H(G) = H\right)\ \leq e^{-\alpha(p)/2}.
    \end{align*}
    Summing over all $H \in \cH$ gives 
    \begin{align*}
        \Pr\left(|S(G)| \leq \alpha(p) \wedge H(G) \in \cH\right) \leq e^{-\alpha(p)/2}.
    \end{align*}
    And by Lemma~\ref{lem:lower-tailbd} and~\ref{lem:upper-tailbd}, we get 
    \begin{align*}
        \Pr\left(|S(G)| \leq \alpha(p)\right) &\leq e^{-\alpha(p)/2} + \Pr\left(H(G) \notin \cH\right)\\ &= o(1). 
    \end{align*}
    Now, summing equation~\eqref{eq:conc-X} over all graphs with $S(G) \geq \alpha(p)$ gives 
    \begin{align*}
        \Pr\left(|X(G)| \leq \alpha(p)/4 \wedge S(G) \geq \alpha(p)\right) \leq e^{-\alpha(p)/16}.
    \end{align*}
    So, we finally obtain
    \begin{align*}
        \Pr\left(|X(G)| \leq \alpha(p)/4\right) &\leq e^{-\alpha(p)/16} + \Pr\left(S(G) \leq \alpha(p)\right)\\ &= o(1). 
    \end{align*}
    Hence, with high probability, $G$ has a matching of reversible layer-2-edges of size at least $\alpha(p)/4$.
\end{proof}

\section{Proofs for Directed Cyclic Graphs} \label{apx:dcgs}

The following statement is a simple observation of the properties of the construction given in Definition~\ref{defn:reverse}.

\begin{lemma}\label{lem:reverse-properties}
Let $G$ be a directed cyclic graph containing a cycle $C$. Let $H = \rgc$ and let $\overline{C}$ be the reversed version of $C$ that occurs in $H$. We have the following properties: 
\begin{enumerate}
    \item each vertex has the same descendants in $G$ as in $H$;
    \item $G = \textsc{Reverse}(H, \overline{C})$.
\end{enumerate}
\end{lemma}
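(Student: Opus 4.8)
The plan is to treat the two parts separately, in both cases reducing everything to a careful bookkeeping of how the two edge-modifications in Definition~\ref{defn:reverse} act. Recall that $\textsc{Reverse}(G,C)$ does two things: it reverses every cycle edge $v_i \to v_{i+1}$ into $v_{i+1}\to v_i$, and, for every cycle vertex $v_i$, it redirects each non-predecessor incoming edge $w \to v_i$ (with $w \in \Pa(v_i)\setminus\{v_{i-1}\}$, parents taken in the input graph) to $w \to v_{i-1}$. It is convenient to classify every edge of a graph by its head: an edge is left untouched unless its head is a cycle vertex, in which case it is either reversed (if it is a cycle edge) or redirected one step backwards along $C$. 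This single classification handles chords (edges between non-consecutive cycle vertices) uniformly, which I expect to be the main source of bookkeeping errors if done naively.

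For Part~1, I would prove the stronger statement that there is an endpoint-preserving correspondence between edges and directed walks in the two directions, which immediately yields equality of descendants (a directed walk can always be pruned to a directed path). Concretely, I would check that every edge of $H$ is witnessed by a directed walk in $G$ with the same endpoints: an untouched edge is already in $G$; a reversed cycle edge $v_{i+1}\to v_i$ is witnessed by traversing $C$ the long way $v_{i+1}\to v_{i+2}\to\cdots\to v_i$ in $G$; and a redirected edge $w\to v_{i-1}$ is witnessed by $w\to v_i$ followed by the same long traversal of $C$ to $v_{i-1}$. The reverse inclusion is entirely symmetric, using that $\overline{C}$ is again a cycle in $H$, so that any cycle vertex can still reach any other through $H$. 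Composing these witnesses along a path shows $\Desc_G(a)=\Desc_H(a)$ for every vertex $a$.

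For Part~2, the goal is to show the construction is an involution, i.e.\ that $G' := \textsc{Reverse}(H,\overline{C})$ equals $G$. First I would pin down the combinatorics of $\overline{C}$: its edges in $H$ are $v_{i+1}\to v_i$, so the $\overline{C}$-predecessor of $v_i$ is $v_{i+1}$. The crux is to compute the parent sets created by the first application; tracking the one-step index shift shows $\Pa_H(v_j)=\{v_{j+1}\}\cup\bigl(\Pa_G(v_{j+1})\setminus\{v_j\}\bigr)$. Applying the construction a second time then reverses $\overline{C}$ back to $C$ and redirects each of these parents forward by one step, and I would verify that this restores $\Pa_{G'}(v_{j+1})=\Pa_G(v_{j+1})$ exactly; edges whose head lies off the cycle are untouched by both applications and so already agree with $G$.

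The step I expect to be the main obstacle is exactly this index accounting in Part~2: because the redirection uses the parents of the \emph{input} graph (before its cycle is reversed) and moves incoming edges one vertex backwards along the cycle, the second application must move them one vertex forward along the reversed cycle, and one has to confirm that these two shifts cancel for every edge, including chords and the freshly created reversed-cycle edges. Making the edge-by-head classification explicit up front is what keeps this cancellation transparent and avoids accidentally redirecting the restored cycle edges.
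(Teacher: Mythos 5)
Your proposal is correct, and it takes the natural (and evidently intended) route: the paper states this lemma without proof, calling it ``a simple observation'' of Definition~\ref{defn:reverse}, so your argument supplies exactly the verification the paper omits. Your witness-walk argument for Part~1 matches how the paper later uses the construction inside the proof of Proposition~\ref{prop:ME-reverse} (replacing an edge into a cycle vertex by a traversal of the cycle), and your parent-set computation $\Pa_H(v_j)=\{v_{j+1}\}\cup\bigl(\Pa_G(v_{j+1})\setminus\{v_j\}\bigr)$ for Part~2 is the correct index bookkeeping, including the chord cases you flag.
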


Now, we prove the key property of the construction in Definition~\ref{defn:reverse}.

{
\renewcommand{\thetheorem}{\ref{prop:ME-reverse}}
\begin{prop}
    Let $G$ be a directed cyclic graph containing a cycle $C$. Then, the graph $H = \textsc{Reverse}(G, C)$ is Markov equivalent to $G$. 
\end{prop}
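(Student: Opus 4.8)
The plan is to prove that $G$ and $H$ encode the same d-connections: for every pair of vertices $a,b$ and every conditioning set $Z \subseteq V \setminus \{a,b\}$, there is an active path from $a$ to $b$ given $Z$ in $G$ exactly when there is one in $H$. Two preliminary reductions make this tractable. First, by part (2) of Lemma~\ref{lem:reverse-properties} the construction is an involution, $G = \textsc{Reverse}(H,\overline{C})$, so it suffices to prove a single implication—that any active path in $G$ yields an active path in $H$ with the same endpoints and conditioning set—and the converse follows by applying the same argument to $H$ and its reversed cycle $\overline{C}$. Second, by part (1) of Lemma~\ref{lem:reverse-properties} descendants are preserved, so the collider-activation criterion (a collider is active iff it lies in $Z$ or has a descendant in $Z$) evaluates identically in $G$ and $H$; this lets me track only the local collider/non-collider structure along paths rather than global reachability.

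The technical core is a local rerouting lemma formalizing the intuition in the main text: for every index $i$ and every $w \in \Pa(v_i)$, the vertices $w$ and $v_{i-1}$ are d-connected in $G$ given \emph{every} set $Z$. Indeed, if $Z \cap C \neq \emptyset$ then $w \to v_i \leftarrow v_{i-1}$ is active because the collider $v_i$ has every vertex of $C$ as a descendant; and if $Z \cap C = \emptyset$ then the directed detour $w \to v_i \to v_{i+1} \to \cdots \to v_{i-1}$ around the cycle is active. By the involution symmetry the mirror statement holds in $H$: there $w$ and $v_i$ are d-connected given every $Z$, witnessed either by the collider $v_{i-1}$ in $w \to v_{i-1} \leftarrow v_i$ or by the reversed-cycle detour $w \to v_{i-1} \to \cdots \to v_{i+1} \to v_i$.

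With these in hand I would carry out path surgery. Starting from an active path $\pi$ in $G$, I walk along it and repair each edge that $\textsc{Reverse}$ altered: a reversed cycle edge keeps its two endpoints, so it survives as a connection with only its orientation flipped, while a redirected edge $w \to v_i$ is replaced by one of the $H$-detours from the mirror lemma. The main obstacle is that flipping an orientation or splicing in a detour can change whether a vertex at the junction is a collider—and hence whether the conditioning-set requirement there is still met. I expect the bulk of the work to be a case analysis organized by (i) whether the relevant cycle vertices lie in $Z$, and (ii) the local orientation of $\pi$ at each modified edge (in particular, whether $v_i$ occurs as a collider on $\pi$ and whether $v_i \in Z$). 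In each configuration one must choose an appropriate detour—collider-type versus directed-type, and possibly shortened to reach the nearest cycle vertex in $Z$—so that the spliced segment is active and its endpoint orientations mesh with $\pi$, turning no active non-collider into a blocked one and no blocked collider into an active one. The descendant-preservation and involution observations are what keep this case analysis finite and symmetric.
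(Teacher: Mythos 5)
Your plan matches the paper's proof essentially step for step: the involution $G = \textsc{Reverse}(H,\overline{C})$ from Lemma~\ref{lem:reverse-properties} to reduce to a single implication, descendant preservation to transfer collider activation between $G$ and $H$, and the case split on whether $Z \cap C = \emptyset$ determining whether each redirected edge $w \to v_i$ is replaced by the collider detour $w \to v_{i-1} \leftarrow v_i$ or by the directed detour around the reversed cycle. The local case analysis you defer---verifying that every junction of the spliced path keeps the right collider/non-collider status---is exactly the content of the paper's proof and goes through as you anticipate.
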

\addtocounter{theorem}{-1}
}

\begin{proofof}{Proposition~\ref{prop:ME-reverse}}
    Suppose $C = (v_1, \dots, v_k)$ in $G$ and $H = \textsc{Reverse}(G, C)$. Fix an arbitrary conditioning set $Z \subseteq V(G)$. Let $\pi = (w_1, \dots, w_t)$ be a simple active path between $w_1$ and $w_t$ given $Z$ in $G$. The first part of the proof is to show that there exists an active path from $w_1$ to $w_t$ given $Z$ in $H$. \\

    \textbf{Case 1:} Suppose some vertex of $C$ is in $Z$, i.e. $C \cap Z \neq \emptyset$. Consider the path $\pi'$ in $H$ obtained as follows: 
    \begin{itemize}
        \item whenever $w \to v_i$ occurs in $\pi$ for $v_i \in C$, replace it with $w \to v_{i-1} \leftarrow v_i$ (similarly, if $v_i \leftarrow w$ occurs in $\pi$, replace it with $v_i \to v_{i-1} \leftarrow w$). 
        \item make the resulting path simple by deleting any self-loops if they exist.
    \end{itemize}
    By definition of $H = \rgc$, $\pi'$ is a valid simple path from $w_1$ to $w_t$ in $H$. We show that $\pi'$ is also an active path given $Z$. 
    
    First, consider any segment $a - b - c$ in $\pi'$ with $b \notin C$. If $b$ is a collider in the segment, i.e., we have $a \to b \leftarrow c$, then $a \to b \leftarrow c$ must occur in $\pi$ as well, since incoming edges to $b \notin C$ are never changed between $\pi$ and $\pi'$. This means $b$ has a $G$-descendant in $Z$, and by part 1 of Lemma~\ref{lem:reverse-properties}, $b$ must also have an $H$-descendant in $Z$, so $a \to b \leftarrow c$ is active in $\pi'$. If $b$ is a non-collider in $a - b - c$, then it must also occur as a non-collider in $\pi$ (but perhaps in a different segment). This is because outgoing edges from $b$ in $\pi'$ that do not exist in $\pi$ must have been a replacement for another outgoing edge from $b$ in $\pi$. Since $\pi$ is active, this means $b \notin Z$, and hence the segment $a - b - c$ is active in $\pi'$. 

    Since $C$ contains a vertex of $Z$, any collider in $C$ is automatically active. To show that the entire path $\pi'$ is active, it now suffices to show that any vertex $v_i \in C$ that is a non-collider in $\pi'$ must have been a non-collider in $\pi$ as well. Indeed, if $v_i$ is a non-collider in $\pi'$ with an outgoing edge $v_i \to w$ with $w \neq v_{i-1}$, then this outgoing edge must exist in $\pi$ or must be a replacement for another outgoing edge in $\pi$ from $v_i$, so $v_i$ is also a non-collider in $\pi$. If $v_i$ is a non-collider in a circle segment $v_{i+1} \to v_i \to v_{i-1}$ in $\pi'$, then $v_i$ must occur as a non-collider in the segment $v_{i+1} \leftarrow v_i \leftarrow v_{i-1}$ in $\pi$. Finally, we could have $w \to v_i \to v_{i-1}$ in $\pi'$ for $w \neq v_{i+1}$, but this is only possible if $w \to v_{i+1} \leftarrow v_i \leftarrow v_{i-1}$ occurred in $\pi$. In each case, $v_i$ is also a non-collider in $\pi$, which concludes the argument. 
    
    \textbf{Case 2:} Suppose none of the vertices in $C$ is part of the conditioning set $Z$, i.e. $C \cap Z = \emptyset$. In this case, we have to construct the path $\pi'$ in $H$ in a different way: 
    \begin{itemize}
        \item whenever $w \to v_i$ occurs in $\pi$ for $v_i \in C$, replace it with $w \to v_{i-1} \to v_{i-2} \to \dots \to v_{i}$ (similarly, if $v_i \leftarrow w$ occurs in $\pi$, replace it with $v_i \leftarrow v_{i+1} \leftarrow \dots \leftarrow v_{i-1} \leftarrow w$). 
        \item make the resulting path simple by deleting any self-loops if they exist.
    \end{itemize}
    By definition of $H = \rgc$, $\pi'$ is a valid simple path from $w_1$ to $w_t$ in $H$. We show that $\pi'$ is also an active path given $Z$. 

    By the same argument as in the first case, any segment $a - b - c$ of $\pi'$ with $b \notin C$ must be active. Since $C$ does not contain any vertex in $Z$, any non-collider of $\pi'$ in $C$ is active too. Hence, it suffices to show that every collider of $\pi'$ in $C$ is active. If $\pi'$ contains a segment $w \to v_i \leftarrow u$ with $w, u \neq v_{i+1}$, then $\pi$ must contain the segment $w \to v_{i+1} \leftarrow u$. Since $\pi$ is active, $v_{i+1}$ must have a $G$-descendant in $Z$, and by part 1 of Lemma~\ref{lem:reverse-properties}, this is also an $H$-descendant. Hence, also $v_i$ has an $H$-descendant in $Z$, so $w \to v_i \leftarrow u$ is active in $\pi'$. Finally, note that $\pi'$ cannot contain a segment of the form $w \to v_i \leftarrow v_{i+1}$, by construction. This completes the first part of the proof. 

    We have shown that, whenever $a$ and $b$ are d-separated in $H = \rgc$ given $Z$, they also must be d-separated in $G$ given $Z$. However, since $G = \textsc{Reverse}(H, \overline{C})$ by part 2 of Lemma~\ref{lem:reverse-properties}, the converse must also hold. Hence, $G$ and $H$ are Markov equivalent. 
\end{proofof}

\end{document}